%% file: neurips_2026.tex
\documentclass{article}

 \usepackage[preprint]{neurips_2026}

\usepackage{ragged2e}
\usepackage{enumitem}
\usepackage{hyperref}  

\usepackage[utf8]{inputenc} 
\usepackage[T1]{fontenc}    
\usepackage{hyperref}       
\usepackage{url}            
\usepackage{booktabs}       
\usepackage{amsfonts}       
\usepackage{nicefrac}       
\usepackage{microtype}      
\usepackage{xcolor}         
\usepackage{amsmath}
\usepackage{amssymb}
\usepackage{mathtools}
\usepackage{amsthm}
\usepackage{booktabs} 
\usepackage{tabularx} 
\usepackage{multirow} 
\usepackage{listings} 

\usepackage{xcolor}
\usepackage{enumitem}
\usepackage{hyperref}
\usepackage{url}
\usepackage{mathtools}
\usepackage{amsthm}

\usepackage{bm}
\usepackage{subcaption}
\usepackage{multirow}
\usepackage{graphics}
\usepackage{float}
\usepackage{array}
\usepackage{caption}
\usepackage[capitalize,noabbrev]{cleveref}
\usepackage{subcaption}

\usepackage{amsmath}
\usepackage{amssymb}
\usepackage{mathtools}
\usepackage{amsthm}

\usepackage{subcaption}
\usepackage{graphicx}
\usepackage{svg}
\usepackage{makecell}
\usepackage[capitalize,noabbrev]{cleveref}

\theoremstyle{plain}
\newtheorem{theorem}{Theorem}[section]
\newtheorem{proposition}[theorem]{Proposition}

\newtheorem{corollary}[theorem]{Corollary}
\theoremstyle{definition}

\theoremstyle{remark}

\usepackage[most]{tcolorbox} 
\usepackage{xcolor}          
\usepackage{soul}            

\definecolor{badcolor}{HTML}{FFCCCB}   
\definecolor{goodcolor}{HTML}{D0F0C0}  
\definecolor{promptcolor}{HTML}{F0F0F0} 

\newtcolorbox{promptbox}[1][]{%
    colback=promptcolor, colframe=gray!50, arc=2mm, boxrule=1pt,
    fonttitle=\bfseries, title={Prompt/Instruction}, #1}

\newtcolorbox{responsebox}[2][]{%
    colback=white, colframe=#2, arc=2mm, boxrule=1.5pt,
    fonttitle=\bfseries, title={#1}, enhanced, attach boxed title to top left={yshift=-2mm, xshift=2mm},
    boxed title style={colback=#2}}
\usepackage{microtype}
\usepackage{graphicx}
\usepackage{subcaption}
\usepackage{booktabs} 
\usepackage{algorithm}
\usepackage{algpseudocode}
\title{Learning Where It Matters: Geometric Anchoring  \\ for Robust Preference Alignment}

\author{%
  Youngjae Cho$^{1}$ \quad
  Jongsuk Kim$^{1,2}$ \quad
  Ji-Hoon Kim$^{1}$ \\
  $^{1}$PYLER \quad $^{2}$KAIST \\
  \texttt{\{youngjae.cho, js.kim, jh.kim\}@pyler.tech}
}
\begin{document}

\maketitle

\begin{abstract}
Preference optimization aligns large language models from pairwise preferences by increasing the margin between preferred and dispreferred responses.
However, margin-based losses alone do not test whether each pair's preference signal is locally stable enough to trust with its update strength.
We propose \emph{Geometric Anchor Preference Optimization} (GAPO), a geometry-aware objective that introduces a batch-conditioned stress test for preference learning.
For each mini-batch, GAPO constructs a pessimistic \emph{anchor} by perturbing the current policy in the first-order direction that decreases the batch-average preference margin.
The resulting \emph{Anchor Gap} measures how much each pair's margin degrades under this shared pessimistic probe and converts this degradation into an instance-wise update weight.
Pairs with larger Anchor Gap receive smaller update weights, while non-brittle pairs retain their preference-gradient direction.
Under smoothness assumptions, we characterize the Anchor Gap as a batch-directional proxy for local margin degradation.
Across multiple open-weight model families, GAPO matches or improves strong preference-optimization baselines on instruction-following and reasoning benchmarks.
It also improves robustness under random and structured preference noise without explicitly modeling label corruption.
Mechanistic diagnostics show that the pairs receiving the smallest GAPO weights are statistically enriched in corrupted supervision, suggesting that GAPO improves robustness by reducing the cumulative influence of brittle preference signals.
\end{abstract}

\input{tex/1_intro}

\input{tex/2_rel}

\input{tex/3_pre}

\input{tex/4_method}

\input{tex/5_analysis}
\input{tex/6_exp}

\input{tex/7_con}

\bibliography{neurips_2026}

\newpage
\input{tex/appendix}


\end{document}

%% file: tex/1_intro.tex
\section{Introduction}
\label{sec:intro}

Aligning large language models (LLMs) with human intent is a central problem in modern AI.
Modern offline alignment methods~\cite{schulman2017proximal,ouyang2022training,christiano2017deep} often learn from pairwise preferences, where each example compares a preferred response against a dispreferred response for a prompt.
Most direct preference objectives~\cite{rafailov2023direct,meng2024simpo} train the policy by increasing an implicit margin between the two responses, and this margin determines how strongly each pair contributes to the update.

However, the current margin alone is an incomplete measure of how much a pair should influence training.
A low-margin pair may be a genuinely informative hard example, but it may also reflect a brittle preference boundary induced by noisy labels, length bias, or other spurious correlations. Margin-based objectives treat these cases similarly because both can produce large updates. This motivates a local diagnostic that tests whether the pair's margin degrades sharply under a small movement around the current policy before trusting it with a large update.

We introduce \emph{Geometric Anchor Preference Optimization} (GAPO), which instantiates this diagnostic as a batch-conditioned local stress test.
Rather than judging each pair solely by the size of its current margin, GAPO asks whether the pair's margin collapses under a shared pessimistic probe induced by the current mini-batch.
This probe is used as a diagnostic, not as a requirement that useful pairs be robust to all perturbations. To instantiate this test, GAPO constructs a pessimistic local reference neighbor, or \emph{anchor}, around the current policy.
For each mini-batch, we perturb the policy along the first-order direction of steepest decrease in the batch-average preference margin within a small radius.
This anchor is not intended to predict the optimizer's  next step.
Instead, it defines a shared pessimistic local probe for the batch, allowing all pairs to be evaluated under the same adverse condition.

We then define the \emph{Anchor Gap} as the degradation of a pair's margin under this anchor.
A large Anchor Gap means that the pair's margin collapses under the batch-conditioned stress test, indicating local brittleness; GAPO down-weights its contribution.
A small Anchor Gap means that the pair remains stable under the same pessimistic condition, and GAPO preserves its standard preference-learning influence.
Crucially, GAPO does not classify pairs as good or bad, estimate an intrinsic data-quality score, or filter the dataset.
It is a risk-aware reweighting mechanism that continuously modulates each pair's contribution based on its local stability, suppressing brittle gradients while preserving non-brittle ones.

On the theory side, GAPO preserves the underlying margin-improvement direction and changes only its instance-wise strength through an Anchor-Gap-dependent scalar weight.
Thus, GAPO acts as a conservative reweighting of the standard preference-gradient signal rather than a new update direction.
Under smoothness assumptions, we characterize the geometric anchor as a batch-directional probe of local margin degradation: the per-instance Anchor Gap consists of a first-order component from gradient magnitude and batch-direction alignment, plus a second-order directional-curvature correction.
This decomposition helps explain why GAPO is not reducible to gradient-norm reweighting alone.


GAPO improves standard instruction-following performance while preserving reasoning ability across model families.
It also improves robustness under noisy preference supervision, including random label flips and structured length-dependent biases.
Mechanistic analyses show that, in controlled noisy settings, corrupted or systematically biased pairs concentrate in the large-Anchor-Gap and are therefore down-weighted most strongly.
These findings indicate that GAPO's robustness arises from conservative attenuation of brittle preference contributions rather than explicit noise detection.

Our contributions are as follows:
\begin{itemize}[leftmargin=*, itemsep=2pt, topsep=2pt]
  \item We propose \emph{Geometric Anchor Preference Optimization} (GAPO), which constructs a batch-conditioned pessimistic anchor and uses the resulting \emph{Anchor Gap} to conservatively attenuate locally brittle preference pairs.
  \item We provide a local theoretical analysis showing that the implemented anchor acts as a batch-directional probe of margin degradation, and that GAPO preserves each pair's preference-gradient direction up to an Anchor-Gap-based scalar reweighting.
  \item We demonstrate that GAPO matches or exceeds strong preference optimization baselines on instruction-following and reasoning benchmarks, while improving robustness under random and structured preference noise.
\end{itemize}

%% file: tex/2_rel.tex
\section{Related Work}
\label{sec:related}

\paragraph{Preference optimization and robustness.}
Offline preference optimization includes reference-based objectives such as DPO~\cite{rafailov2023direct}, IPO~\cite{azar2024general}, and KTO~\cite{ethayarajh2024kto}, as well as variants addressing length bias or weak supervision such as RDPO~\cite{park2024disentangling} and ORPO~\cite{hong2024orpo}. Reference-free methods such as CPO~\cite{xu2024contrastive} and SimPO~\cite{meng2024simpo} remove the fixed reference for efficiency, while $\alpha$-DPO~\cite{wu2025alphadpo} interpolates between the SFT model and the evolving policy. These methods expose a tension between reference-induced stability and policy flexibility. For noisy supervision, prior work typically models label reliability directly, for example through label smoothing in r-DPO~\cite{chowdhury2024provably} or distributionally robust optimization in Dr.~DPO~\cite{wu2024towards}. GAPO takes a complementary approach: rather than estimating label noise, it uses a shared pessimistic perturbation to measure local preference stability and attenuate brittle pairs.

\paragraph{Geometric perspective.}
Sharpness-Aware Minimization (SAM)~\cite{foret2020sharpness} improves generalization by perturbing parameters to seek flatter solutions. GAPO uses a related geometric idea for a different purpose: it converts local margin degradation under a batch-conditioned pessimistic anchor into an instance-wise reweighting signal, selectively downweighting brittle preference pairs while preserving the underlying preference-gradient direction.

%% file: tex/3_pre.tex
\begin{figure*}[t!]
  \centering



  \begin{subfigure}[b]{0.32\linewidth}
    \centering
    \includegraphics[width=\linewidth]{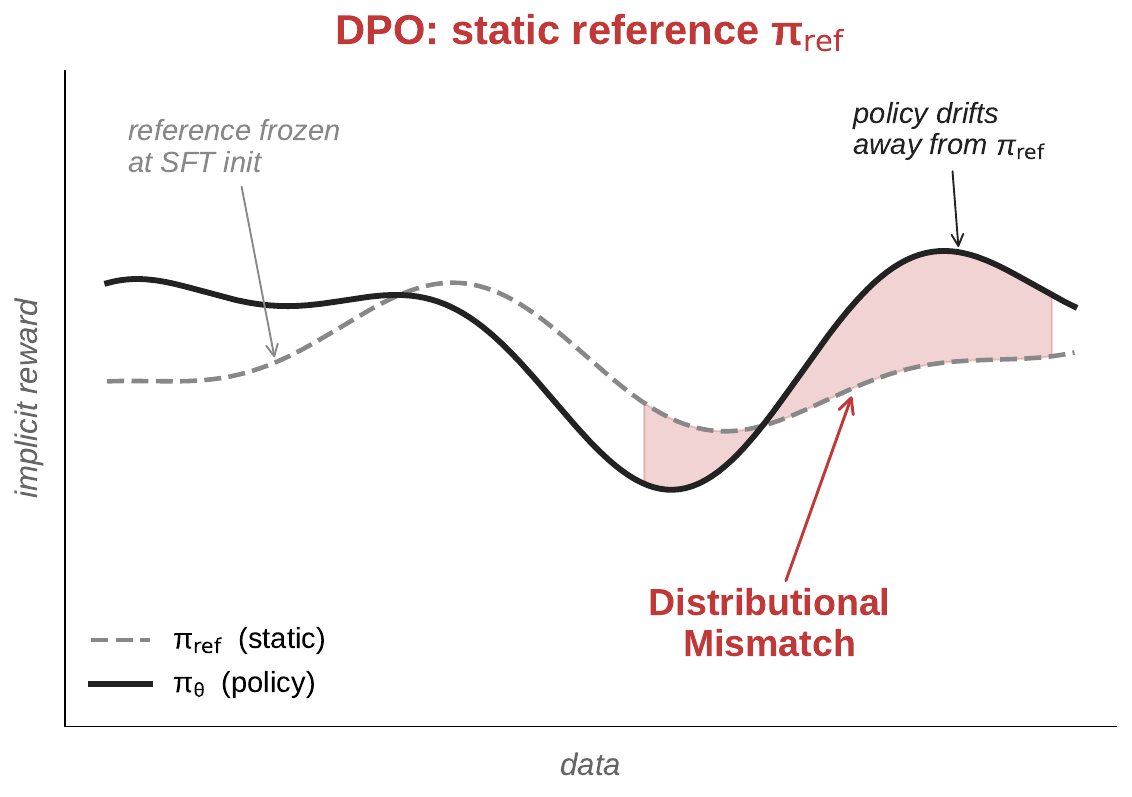}
    \caption{DPO: distributional mismatch with a static reference.}
    \label{fig:sub_1}
  \end{subfigure}
  \hfill
  \begin{subfigure}[b]{0.32\linewidth}
    \centering
    \includegraphics[width=\linewidth]{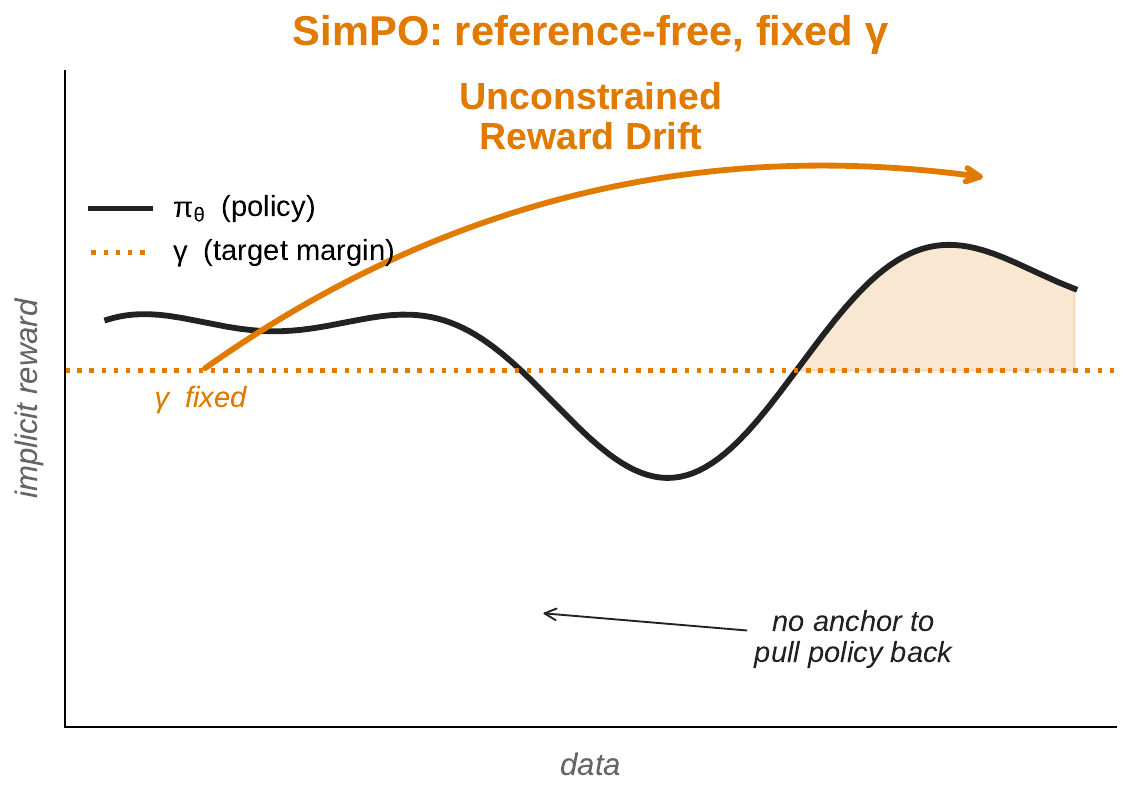}
    \caption{SimPO: unconstrained reward drift with no anchor.}
    \label{fig:sub_2}
  \end{subfigure}
  \hfill
  \begin{subfigure}[b]{0.32\linewidth}
    \centering
    \includegraphics[width=\linewidth]{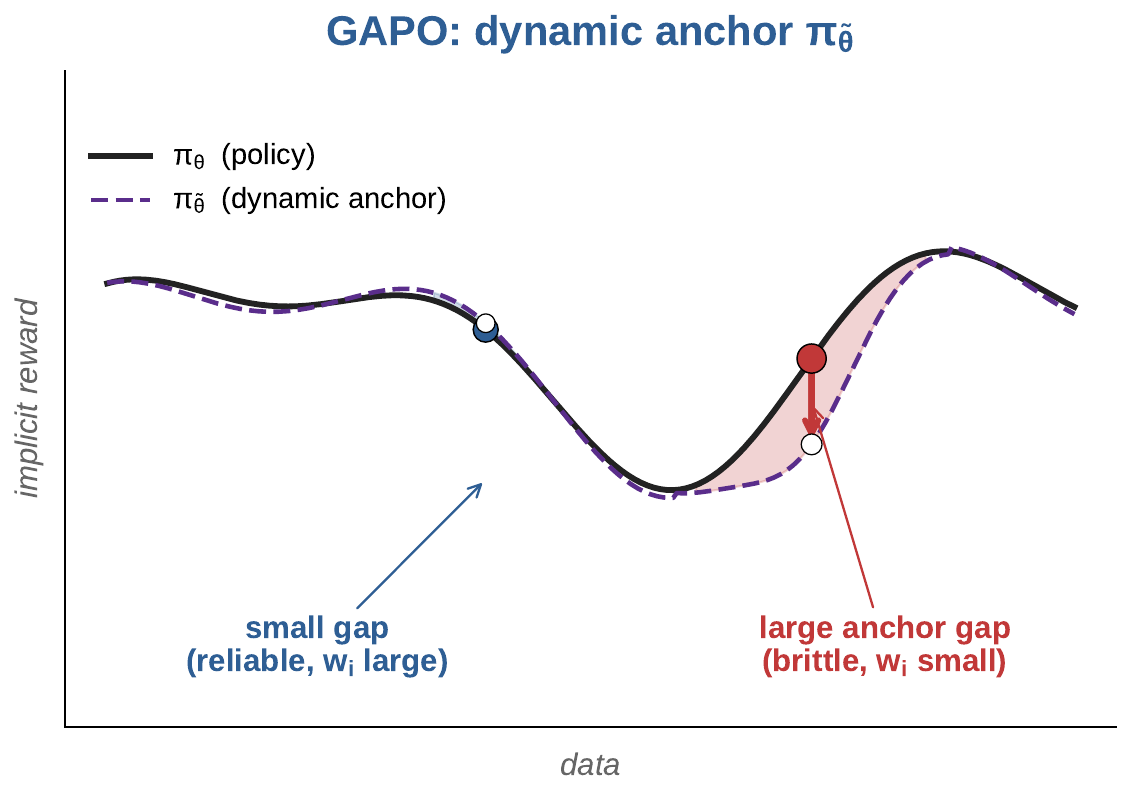}
    \caption{GAPO: dynamic anchor with adaptive Anchor Gap.}
    \label{fig:sub_3}
  \end{subfigure}

  \caption{%
    \textbf{Conceptual comparison of DPO, SimPO, and GAPO.}
    \textbf{(a)} DPO's frozen reference causes \emph{distributional
    mismatch} as $\pi_\theta$ drifts.
    \textbf{(b)} SimPO has no anchor, allowing \emph{unconstrained
    reward drift}.
    \textbf{(c)} GAPO's anchor tracks the policy and probes local
    geometry.
  }
  \label{fig:main_concept}
\end{figure*}
\section{Preliminaries}
\label{sec:preliminaries}
We consider the alignment of a policy model $\pi_\theta$ using a pairwise preference dataset $\mathcal{D} = \{(x_i, y_{w,i}, y_{l,i})\}_{i=1}^N$, where $y_{w,i}$ and $y_{l,i}$ denote the preferred and dispreferred responses to prompt $x_i$, respectively. 
\paragraph{Direct Preference Optimization (DPO)}
DPO~\cite{rafailov2023direct} optimizes the policy model by minimizing a logistic (Bradley--Terry) loss over pairwise preferences. It uses a fixed reference model $\pi_{\text{ref}}$ (typically the SFT model) to regularize the deviation of the active policy. The objective is defined as:
\begin{equation}\label{eq:dpo}
    \begin{split}
    \mathcal{L}_{\text{DPO}}(\theta) = -\mathbb{E}_{(x, y_w, y_l) \sim \mathcal{D}} \Big[ & \log \sigma \Big(  \beta \log \frac{\pi_\theta(y_w|x)}{\pi_{\text{ref}}(y_w|x)} 
     - \beta \log \frac{\pi_\theta(y_l|x)}{\pi_{\text{ref}}(y_l|x)} \Big) \Big],
    \end{split}
\end{equation}
where $\sigma$ is the sigmoid function and $\beta>0$ is an inverse temperature that controls the strength of reference regularization.

\paragraph{Simple Preference Optimization (SimPO)}
To address the dependency on the reference model and potential length biases, SimPO~\cite{meng2024simpo} defines a length-normalized reward $p_\theta(x, y) = \frac{1}{|y|} \log \pi_\theta(y|x)$ and introduces a target margin $\gamma$. The resulting objective eliminates $\pi_{\text{ref}}$ as follows
\begin{equation}\label{eq:simpo}
    \begin{split}
    \mathcal{L}_{\text{SimPO}}(\theta) = -\mathbb{E}_{(x, y_w, y_l) \sim \mathcal{D}} \Big[ & \log \sigma \big( p_\theta(x, y_w) 
     - p_\theta(x, y_l) - \gamma \big) \Big].
    \end{split}
\end{equation}

\paragraph{Common margin-based update structure.}
DPO and SimPO differ in their reference structure: DPO defines the preference margin relative to a fixed reference model, whereas SimPO defines it directly from length-normalized policy likelihoods with a target margin. 
These choices induce different trade-offs, including reference anchoring, policy flexibility, and susceptibility to reference mismatch or reward drift~\cite{pmlr-v235-xiong24a,meng2024simpo,lin2024mitigating,noukhovitch2023language}. 
Despite these differences, both objectives assign per-pair update strength through a margin-based logistic loss. 
GAPO builds on this common structure by preserving the underlying preference-gradient direction while introducing a local geometric diagnostic for how much each pair's margin degrades under a shared pessimistic perturbation.



%% file: tex/4_method.tex
\section{Method}
\label{sec:method}

\subsection{Batch-Level Geometric Anchor}
The batch-wise construction is intentional: because SGD updates parameters using a batch-averaged gradient, the shared perturbation probes stability along an optimization-relevant adverse direction. 
It also evaluates all pairs under the same pessimistic condition, making Anchor Gaps comparable across instances in the batch.
We adopt the length-normalized implicit reward $p_\theta(x,y)$ (as in SimPO) and define the preference margin for the $i$-th instance as
\begin{equation}\label{eq:delta-cur}
    M_i(\theta) \;:=\; p_\theta(x_i, y_{w, i}) - p_\theta(x_i, y_{l, i}).
\end{equation}
For a mini-batch $\mathcal{B}$, let
\begin{equation}
    \bar M_{\mathcal B}(\theta) := \frac{1}{|\mathcal B|}\sum_{j\in\mathcal B} M_j(\theta).
\end{equation}

We construct a \emph{geometric anchor} by perturbing the current policy along a shared adversarial direction that maximally decreases the batch-average margin within a local $\ell_2$ ball of radius $\rho$:
\begin{equation}\label{eq:perturbation}
    \epsilon_{\mathcal{B}}^*
    \;=\;
    -\rho\,
    \frac{\nabla_\theta \bar M_{\mathcal{B}}(\theta)}
    {\left\|\nabla_\theta \bar M_{\mathcal{B}}(\theta)\right\|_2},
    \qquad
    \tilde\theta
    \;=\;
    \theta + \epsilon_{\mathcal{B}}^*.
\end{equation}
We treat $\epsilon_{\mathcal{B}}^*$ as a shared pessimistic perturbation: every instance in the batch is evaluated under the same locally adversarial direction, rather than its own worst-case direction.

We then apply this common perturbation to define the anchor parameter $\tilde{\theta} = \theta + \epsilon_{\mathcal{B}}^*$. Using this anchor, we evaluate the anchor margin for each instance $i$:
\begin{equation}
    M_i(\tilde{\theta}) \;:=\; p_{\tilde{\theta}}(x_i, y_{w,i}) - p_{\tilde{\theta}}(x_i, y_{l,i}).
\end{equation}
To ensure that the anchor acts as a frozen reference, we explicitly apply a \textit{stop-gradient} operator, denoted by $sg(\cdot)$, e.g., $sg(M_i(\tilde{\theta}))$, which returns the same value but blocks gradients from flowing through its argument.

\subsection{Objective Function and Anchor Gap}
We define the \emph{Anchor Gap} as the degradation of the preference margin under the detached anchor; a large positive gap indicates that the pair's margin would decrease substantially under the shared adversarial perturbation:
\begin{equation}\label{eq:gap}
    \Gamma_i(\theta) := M_i(\theta)- \operatorname{sg}(M_i(\tilde{\theta})).
\end{equation}
We define the GAPO objective as a logistic loss on the Anchor Gap:
\begin{equation}\label{eq:GAPO-loss}
    \mathcal{L}_{\text{GAPO}}(\theta) = - \frac{1}{N} \sum_{i=1}^N\log \sigma\Big(\beta\,\Gamma_i(\theta)-\gamma\Big),
\end{equation}
where $\gamma>0$ is the target margin.
Note that GAPO does not optimize the anchor itself. Instead, the anchor provides a detached local baseline against which the stability of each instance margin is assessed.

\subsection{Optimization Interpretation: Adaptive Reweighting}
\label{subsec:dynamics}

To understand the optimization mechanism of GAPO, we analyze the gradient of the objective function defined in Eq.~\eqref{eq:GAPO-loss}. Differentiating with respect to $\theta$ yields the expected gradient update:
\begin{equation}\label{eq:gradient}
    \nabla_\theta \mathcal{L}_{\text{GAPO}}(\theta) \;=\; -\mathbb{E}_{(x, y_w, y_l) \sim \mathcal{D}} \left[ \underbrace{\beta \sigma\big(\gamma - \beta \Gamma_i(\theta)\big)}_{w_i(\theta)} \cdot \nabla_\theta M_i(\theta) \right]
\end{equation}

This derivation highlights a key structural property: GAPO preserves the update direction of standard preference optimization ($\nabla_\theta M_i(\theta)$) but dynamically scales it by an instance-dependent weight $w_i(\theta) \in (0, \beta)$.
This view clarifies the role of the Anchor Gap. GAPO should not be interpreted as directly estimating absolute data quality or explicitly classifying examples as ``good'' or ``bad.'' Rather, $\Gamma_i(\theta)$ measures how much instance $i$'s margin degrades under the shared adversarial perturbation induced by the current mini-batch. The resulting GAPO weight $w_i(\theta) = \beta\sigma(\gamma - \beta\Gamma_i(\theta))$ is an instance-wise weight on the standard gradient: pairs with large Anchor Gap receive smaller weights, while pairs that remain stable under the same probe receive larger Anchor-Gap-based weights while preserving their preference-gradient direction.

GAPO therefore acts as a geometry-aware reweighter: its weight $w_i$ converts the Anchor Gap $\Gamma_i$ into adaptive attenuation, downweighting locally brittle preference contributions while preserving the gradient direction of standard preference optimization. This provides a plausible mechanism for robustness under noisy supervision: in our noisy settings (Section~\ref{sec:experiments}), flipped or systematically biased pairs tend to develop larger Anchor Gaps and are therefore down-weighted more strongly.

%% file: tex/5_analysis.tex
\section{Theoretical Analysis}
\label{sec:theory}

Here we analyze the implemented batch-wise anchor. Our goal is not to establish a global convergence or robustness guarantee, but to characterize the local quantity measured by the Anchor Gap and how the shared pessimistic probe induces instance-wise reweighting.
We define the normalized batch direction
\begin{equation}
    g_{\mathcal B}(\theta)
    :=
    \nabla_\theta \bar M_{\mathcal B}(\theta),
    \qquad
    v_{\mathcal B}(\theta)
    :=
    \frac{g_{\mathcal B}(\theta)}{\|g_{\mathcal B}(\theta)\|_2}.
\end{equation}
The implemented geometric anchor is $\tilde\theta = \theta-\rho\,v_{\mathcal B}(\theta)$, equivalent to the construction in Eq.~\ref{eq:perturbation}.

\begin{proposition}[Batch-directional margin degradation]
\label{prop:batch-degradation}
Under standard smoothness assumptions and sufficiently small $\rho$,
\begin{equation}
\bar M_{\mathcal B}(\theta)-\bar M_{\mathcal B}(\tilde\theta)
=
\rho \|\nabla_\theta \bar M_{\mathcal B}(\theta)\|_2
-
\frac{\rho^2}{2}
v_{\mathcal B}(\theta)^\top
\nabla_\theta^2 \bar M_{\mathcal B}(\theta)
v_{\mathcal B}(\theta)
+
O(\rho^3).
\end{equation}
Moreover, $v_{\mathcal B}(\theta)$ maximizes the first-order decrease of the batch-average margin among all unit directions.
\end{proposition}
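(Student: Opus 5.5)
The plan is a second-order Taylor expansion of the batch-average margin along the fixed direction $-v_{\mathcal B}(\theta)$, followed by Cauchy--Schwarz for the optimality claim.

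\textbf{Assumptions.} I would make ``standard smoothness'' concrete:
\begin{itemize}[leftmargin=*, itemsep=2pt, topsep=2pt]
  \item each $M_j$, and hence $\bar M_{\mathcal B}$, is three times continuously differentiable on a neighbourhood of $\theta$;
  \item equivalently for the derivation, $\nabla^2_\theta \bar M_{\mathcal B}$ is Lipschitz there with constant $L_3$;
  \item $g_{\mathcal B}(\theta)\neq 0$, so that $v_{\mathcal B}(\theta)$ is well defined.
\end{itemize}
These hold whenever the length-normalized log-likelihoods $p_\theta(x,y)$ are smooth in $\theta$. A finite batch only averages finitely many such functions, so no uniformity issue arises.

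\textbf{Expansion.} Set $\phi(t):=\bar M_{\mathcal B}(\theta - t\,v_{\mathcal B}(\theta))$ for $t\in[0,\rho]$. This is a one-dimensional restriction, so Taylor's theorem with Lagrange (or integral) remainder gives
\[
\phi(\rho)=\phi(0)+\rho\,\phi'(0)+\tfrac{\rho^2}{2}\phi''(0)+R,\qquad |R|\le \tfrac{L_3}{6}\rho^3 .
\]
The derivatives at zero are
\[
\phi'(0) = -g_{\mathcal B}^\top v_{\mathcal B} = -\|g_{\mathcal B}\|_2,
\qquad
\phi''(0)=v_{\mathcal B}^\top \nabla^2_\theta\bar M_{\mathcal B}\, v_{\mathcal B}.
\]
The sign in $\phi''(0)$ is positive because the direction $-v$ appears twice. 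Since $\phi(\rho)=\bar M_{\mathcal B}(\tilde\theta)$, rearranging $\phi(0)-\phi(\rho)$ yields exactly the stated identity, with the $O(\rho^3)$ term controlled by $L_3$.

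\textbf{Optimality of the direction.} For any unit vector $u$, the first-order decrease of $\bar M_{\mathcal B}$ along $-u$ is $\rho\, g_{\mathcal B}^\top u$. By Cauchy--Schwarz this is at most $\rho\|g_{\mathcal B}\|_2$, with equality iff $u = v_{\mathcal B}$ whenever $g_{\mathcal B}\ne 0$. Equivalently, $-v_{\mathcal B}$ minimizes the linearization $\bar M_{\mathcal B}(\theta)+g_{\mathcal B}^\top\epsilon$ over $\|\epsilon\|_2\le\rho$, attaining the boundary value $-\rho\|g_{\mathcal B}\|_2$. This recovers Eq.~\eqref{eq:perturbation}.

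\textbf{Main obstacle.} The calculus is routine; the only delicate point is stating precisely what ``sufficiently small $\rho$'' and the smoothness hypotheses mean. Two things must be ensured:
\begin{itemize}[leftmargin=*, itemsep=2pt, topsep=2pt]
  \item The segment $[\theta,\tilde\theta]$ must stay in the region where the third-derivative bound holds, which fixes the admissible range of $\rho$.
  \item The non-degeneracy $g_{\mathcal B}(\theta)\neq0$ must be assumed explicitly. When $g_{\mathcal B}(\theta)=0$ the normalization is undefined, and the proposition should be read with the convention that the anchor is not formed.
\end{itemize}
I would state both as hypotheses at the start of the proof.
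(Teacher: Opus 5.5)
Your proposal is correct and follows the paper's proof essentially step for step. Both arguments use a one-dimensional Taylor expansion along $-v_{\mathcal B}(\theta)$ with a Lipschitz-Hessian remainder bound of $\tfrac{L}{6}\rho^3$, then Cauchy--Schwarz for the steepest-descent claim; the paper parametrizes $t\in[0,1]$ with $\rho$ inside the argument instead of your $t\in[0,\rho]$, which is immaterial, and your explicit hypothesis $g_{\mathcal B}(\theta)\neq 0$ is a useful addition because the paper relies on it only implicitly when normalizing.
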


This proposition states two properties of the batch-wise probe: \(v_B\) is the unit direction of steepest first-order decrease of \(\bar M_B\), and the expansion quantifies how much the batch-average margin degrades along this probe. A spectral bound on the curvature term is given in Appendix~\ref{app:theory}. The full proof is given in Appendix~\ref{proof:thm5.1}.

\begin{proposition}[Tri-factor decomposition of the Anchor Gap]
\label{prop:tri-factor}
For each instance $i \in \mathcal B$, under the same smoothness assumptions and sufficiently small $\rho$,
\begin{equation}
\Gamma_i(\theta)
=
\rho \|\nabla_\theta M_i(\theta)\|_2 \, \alpha_{i,\mathcal B}(\theta)
-
\frac{\rho^2}{2}\, \kappa_{i,\mathcal B}(\theta)
+
O(\rho^3),
\end{equation}
where
\[
\alpha_{i,\mathcal B}(\theta)
:=
\frac{\langle \nabla_\theta M_i(\theta),\, g_{\mathcal B}(\theta)\rangle}
{\|\nabla_\theta M_i(\theta)\|_2 \, \|g_{\mathcal B}(\theta)\|_2}
\quad\text{and}\quad
\kappa_{i,\mathcal B}(\theta)
:=
v_{\mathcal B}(\theta)^\top \nabla_\theta^2 M_i(\theta)\, v_{\mathcal B}(\theta).
\]
\end{proposition}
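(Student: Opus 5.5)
The plan is a second-order Taylor expansion of the single-instance margin $M_i$ around $\theta$ along the fixed direction $-\rho\,v_{\mathcal B}(\theta)$. This mirrors the argument for Proposition~\ref{prop:batch-degradation}, but is applied to $M_i$ instead of $\bar M_{\mathcal B}$.

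\textbf{Step 1: value of the Anchor Gap.} Because $\operatorname{sg}$ does not change values, the numerical value of $\Gamma_i(\theta)$ equals $M_i(\theta)-M_i(\tilde\theta)$. Here $\tilde\theta=\theta-\rho v_{\mathcal B}(\theta)$ and $\|v_{\mathcal B}\|_2=1$. We need $g_{\mathcal B}(\theta)\neq 0$ and $\nabla_\theta M_i(\theta)\neq 0$ so that $v_{\mathcal B}$ and $\alpha_{i,\mathcal B}$ are defined.

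\textbf{Step 2: the smoothness assumption.} We take the same assumption as before: $M_i$ is $C^3$, or more weakly $C^2$ with a Lipschitz Hessian, on a neighbourhood of $\theta$ that contains the ball of radius $\rho$. Write $h=-\rho v_{\mathcal B}$ and apply Taylor's theorem with remainder:
\[
M_i(\theta+h)=M_i(\theta)+\nabla_\theta M_i(\theta)^\top h+\tfrac12 h^\top\nabla_\theta^2M_i(\theta)h+R_i,\qquad |R_i|\le \tfrac{L_i}{6}\|h\|_2^3 .
\]
Since $\|h\|_2=\rho$, the remainder $R_i$ is $O(\rho^3)$, with a constant depending only on the local bound $L_i$ on the third derivative or on the Hessian Lipschitz constant. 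The important point is that $v_{\mathcal B}$ has unit norm by construction. Its dependence on the batch therefore does not affect the size of the remainder, because the direction is held fixed while $\rho$ varies.

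\textbf{Step 3: rearrange and identify the factors.} Negating the expansion gives
\[
\Gamma_i=\rho\,\nabla_\theta M_i(\theta)^\top v_{\mathcal B}(\theta)-\tfrac{\rho^2}{2}v_{\mathcal B}(\theta)^\top\nabla_\theta^2M_i(\theta)v_{\mathcal B}(\theta)+O(\rho^3).
\]
Substituting $v_{\mathcal B}=g_{\mathcal B}/\|g_{\mathcal B}\|_2$, the first-order term becomes
\[
\rho\,\langle\nabla_\theta M_i,g_{\mathcal B}\rangle/\|g_{\mathcal B}\|_2=\rho\|\nabla_\theta M_i\|_2\,\alpha_{i,\mathcal B}.
\]
The second-order term is $-\tfrac{\rho^2}{2}\kappa_{i,\mathcal B}$ by the definition of $\kappa_{i,\mathcal B}$, which gives the claimed decomposition. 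As a consistency check, averaging the expansion over $i\in\mathcal B$ should recover Proposition~\ref{prop:batch-degradation}. The averaged first-order term is $\rho\langle g_{\mathcal B},v_{\mathcal B}\rangle=\rho\|g_{\mathcal B}\|_2$, and linearity of the Hessian gives the averaged curvature term.

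\textbf{Main obstacle.} The algebra is routine. The only delicate part is making the hypotheses precise enough that the $O(\rho^3)$ term is legitimate and uniform. This requires three things:
\begin{itemize}
\item the margins are smooth, which holds for the length-normalized log-likelihoods of a network with smooth activations;
\item the gradients are nonzero;
\item $\rho$ is small enough that the segment $[\theta,\tilde\theta]$ stays inside the region where the third-order bound holds.
\end{itemize}
I would state these explicitly as the ``standard smoothness assumptions''. The remainder constant is then $\sup\|\nabla^3 M_i\|/6$ over that ball.
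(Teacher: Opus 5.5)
Your proposal is correct and follows essentially the same route as the paper's proof: a second-order Taylor expansion of $M_i$ along $-\rho\, v_{\mathcal B}(\theta)$ under a $C^3$ assumption, followed by rewriting $\rho\langle \nabla_\theta M_i, v_{\mathcal B}\rangle$ as $\rho\|\nabla_\theta M_i\|_2\,\alpha_{i,\mathcal B}$. Your explicit remainder constant and the nondegeneracy conditions $g_{\mathcal B}(\theta)\neq 0$ and $\nabla_\theta M_i(\theta)\neq 0$ are details the paper leaves implicit, but they do not change the argument.
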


Proposition~\ref{prop:tri-factor} clarifies the role of the Anchor Gap. The Anchor Gap is not an absolute data-quality score; it measures local margin degradation under the implemented shared pessimistic probe. A large \(\Gamma_i\) can arise from large gradient magnitude, positive alignment with the batch-average margin direction, or second-order curvature effects along \(v_B\). The alignment term $\alpha_{i,\mathcal B}$ is the batch-specific factor that distinguishes GAPO from ordinary gradient-magnitude. The proof is given in Appendix~\ref{app:instance-decomp}.



%% file: tex/6_exp.tex
\section{Experiments}
\label{sec:experiments}
We evaluate GAPO across four open-weight backbones on instruction-following, reasoning, and noise-robustness benchmarks; full implementation details, hyperparameters, and dataset/baseline lists are in Appendix~\ref{app:implementation}. We organize our analysis around three questions motivated by the local-stability view: (1) Does conservative attenuation of locally brittle pairs improve alignment without sacrificing reasoning? (2) Does it improve robustness when corrupted or spurious preferences are present? (3) Does the batch-conditioned Anchor Gap yield a meaningful instance-wise reweighting signal beyond gradient magnitude alone?

\subsection{Main Alignment and Reasoning Results}

\input{tab/main/1_align}
\paragraph{Alignment Performance.} 
Table~\ref{tab:main_results} summarizes instruction-following performance across model families.
Overall, GAPO consistently matches or improves alignment metrics relative to strong reference-based and reference-free baselines.
The gains are not confined to a single benchmark: we observe improvements on both AlpacaEval~2.0 and Arena-Hard, suggesting that the benefit is not merely an artifact of a particular evaluator.

\input{tab/main/2_reasoning}

\paragraph{Reasoning Preservation.}
A common pitfall of preference optimization is the trade-off between improved instruction-following and degraded reasoning (the \emph{alignment tax}).
Table~\ref{tab:reasoning} shows that GAPO improves alignment while largely preserving reasoning performance.
In particular, compared to reference-free training that can exhibit drift, GAPO tends to maintain stronger reasoning scores while still delivering alignment gains.

\subsection{Robustness Under Noisy Supervision}
Noisy labels can create preference pairs whose margins are locally brittle along the current batch direction. We evaluate GAPO under random label flips and a structured length-dependent noise setting that selectively flips pairs based on response length (creating a ``shorter is better'' bias). Figure~\ref{fig:noise_robustness} reports Reward Accuracy on the test set.
\begin{figure}[h]
    \centering
    \includegraphics[width=0.8\linewidth]{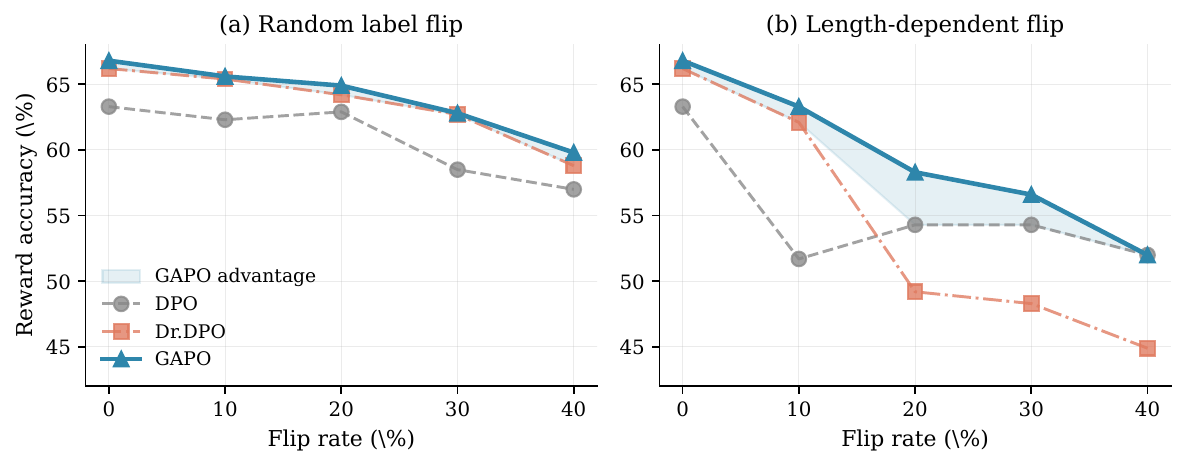}
    \caption{\textbf{Robustness against label noise.} Reward accuracy under (a) random label flips and (b) length-dependent label flips, evaluated on Pythia-2.8B trained on Anthropic HH. The shaded region in each panel marks the gap between GAPO and the strongest baseline at each flip rate. Exact numerical values are reported in Appendix~\ref{app:noise_table}.}
    \label{fig:noise_robustness}
\end{figure}

Figure~\ref{fig:noise_robustness} shows that GAPO improves over dedicated noise-robust baselines across noise rates and noise types, without explicitly modeling noise. This is consistent with the Anchor-Gap-based interpretation: GAPO attenuates large-gap pairs, which are enriched in noisy supervision in our diagnostic settings.
The structured length-dependent setting reveals a sharper distinction: at flip rates of $20\%$ and above, the explicit-noise-modeling baseline (Dr.DPO) collapses below DPO, whereas GAPO degrades smoothly and remains the strongest method, consistent with conservative attenuation generalizing beyond random label corruption.
Appendix~\ref{app:prospective} provides additional analysis of the induced per-instance ranking as an auxiliary signal, not as a data-quality estimate.

\subsection{Diagnosing the Anchor-Gap Reweighting Signal}

\label{sec:mechanism-empirical}

\subsubsection{GAPO down-weights corrupted supervision}
\label{sec:retrospective}

We examine \emph{whether} the Anchor Gap $\Gamma_i$ tracks per-instance margin degradation along the current batch direction during training. We use flipped labels as a controlled probe of corrupted supervision and test whether they become enriched among strongly attenuated pairs: under random-flip preference noise ($20\%$), we track the GAPO weight $w_i = \beta\,\sigma(\gamma - \beta\Gamma_i)$ assigned to each pair across training.

\paragraph{GAPO assigns progressively smaller weights to flipped preferences (Fig.~\ref{fig:mechanism}a).}
Using the GAPO weight $w_i = \beta\,\sigma(\gamma - \beta\Gamma_i)$, the mean weight assigned to flipped preferences becomes progressively smaller than that of clean preferences over training. By the end of the first epoch, flipped pairs receive a $16.9\%$ smaller weight on average, consistent with the conservative-attenuation interpretation: brittle pairs along the batch direction develop larger Anchor Gaps and correspondingly smaller weights.

\paragraph{Low-weight tier is enriched in flipped supervision.}
Stratifying examples directly by the GAPO weight $w_i$ produces a sharper divergence (Fig.~\ref{fig:mechanism}b). The bottom $20\%$ weight tier (most strongly down-weighted by GAPO) accumulates a flip rate of $30.1\%$ by the final checkpoint, while the top $20\%$ tier (least down-weighted) drops to $16.9\%$, a $13.2\%$ gap. The pairs GAPO most strongly down-weights are statistically enriched in corrupted supervision in this controlled setting.


\begin{figure*}[t]
  \centering
  \begin{subfigure}[b]{0.4\linewidth}
    \centering
    \includegraphics[width=\linewidth]{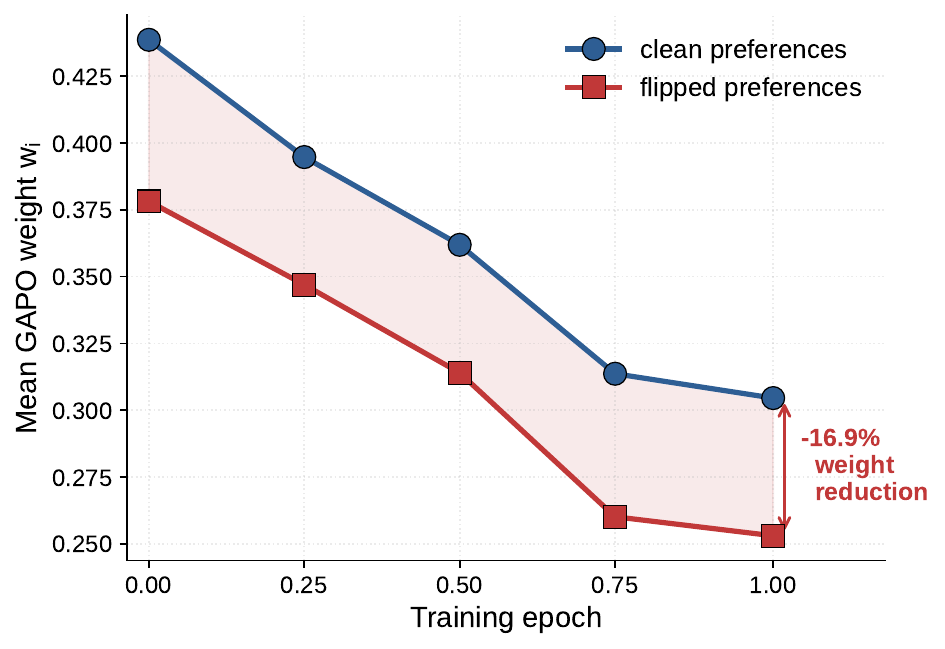}
    \caption{GAPO weight: clean vs.\ flipped.}
    \label{fig:mechanism-a}
  \end{subfigure}
  \begin{subfigure}[b]{0.4\linewidth}
    \centering
    \includegraphics[width=\linewidth]{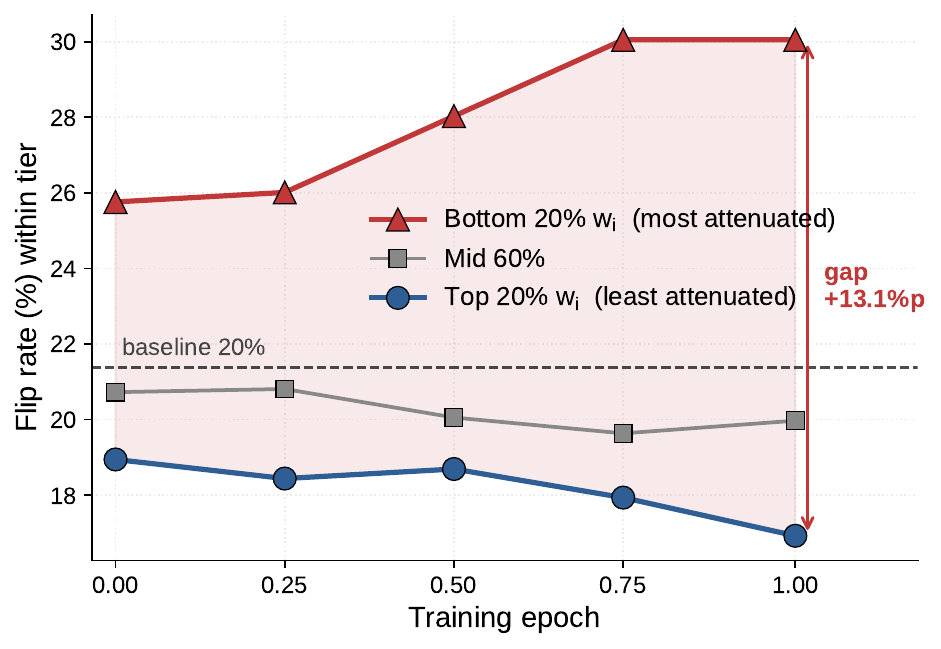}
    \caption{Flip rate by GAPO weight tier.} 
    \label{fig:mechanism-b}
  \end{subfigure}
  \caption{%
    \textbf{GAPO's Anchor-Gap reweighting is statistically biased toward
    flipped supervision during training}
    \textbf{(a)} Mean GAPO weight $w_i = \beta\,\sigma(\gamma - \beta\Gamma_i)$
    separates flipped from clean preferences over training.
    \textbf{(b)} Stratifying examples directly by $w_i$ at each checkpoint.
    Detailed numerical results are reported in Appendix~\ref{app:mechanism-stats}.
  }
  \label{fig:mechanism}
\end{figure*}
\subsubsection{The batch-wise probe remains informative at the instance level}
\label{sec:batch-wise-validation}

The preceding results show that GAPO's weights are biased toward corrupted supervision. 
A remaining concern is whether per-instance GAPO weights remain meaningful when computed from a shared batch-wise perturbation rather than instance-specific worst-case probes. 
This diagnostic tests a limited but important property: whether the shared batch direction preserves a useful first-order ranking of locally sensitive instances.
Proposition~\ref{prop:tri-factor} shows that the first-order term of the batch-wise Anchor Gap equals the instance gradient magnitude scaled by the batch-alignment coefficient \(\alpha_{i,\mathcal B}\). 
We therefore test whether the batch-wise probe induces a per-instance ranking comparable to its instance-wise first-order counterpart.

We compute, for each preference pair, a batch-directional first-order proxy 
$\widehat{\Gamma}_i^{\text{batch}}
= \rho \|\nabla_\theta M_i\|_2 \alpha_{i,\mathcal B}$, whereas the corresponding instance-directional proxy is $\widehat{\Gamma}_i^{\text{inst}}
= \rho \|\nabla_\theta M_i\|_2$.

The batch-wise anchor moves in the same margin-decreasing half-space as each instance's own first-order worst-case direction in \(99.4\%\) of pairs (Figure~\ref{fig:batch-validation-a}), indicating that the shared perturbation is rarely adversarially misaligned with individual instances. 
Although the shared direction is not identical to each instance's worst-case direction, the resulting ranking of preference pairs closely matches the instance-directional proxy (Figure~\ref{fig:batch-validation-b}). 
The match is strongest in the most relevant tier: the top-\(20\%\) most-downweighted subsets of the two rankings overlap by \(86.1\%\) (Figure~\ref{fig:batch-validation-c}). 
This suggests that the shared anchor is a practical ranking proxy for deciding which pairs to attenuate, while avoiding the cost of per-instance perturbations.

We emphasize that this is a first-order ranking validation rather than a claim of equivalence between batch-wise and instance-wise worst-case degradation. 
It also does not imply that gradient magnitude alone explains GAPO's gains; the norm-only alternative is evaluated separately in next section.

\begin{figure*}[t]
  \centering
  \begin{subfigure}[t]{0.32\linewidth}
    \includegraphics[width=\linewidth]{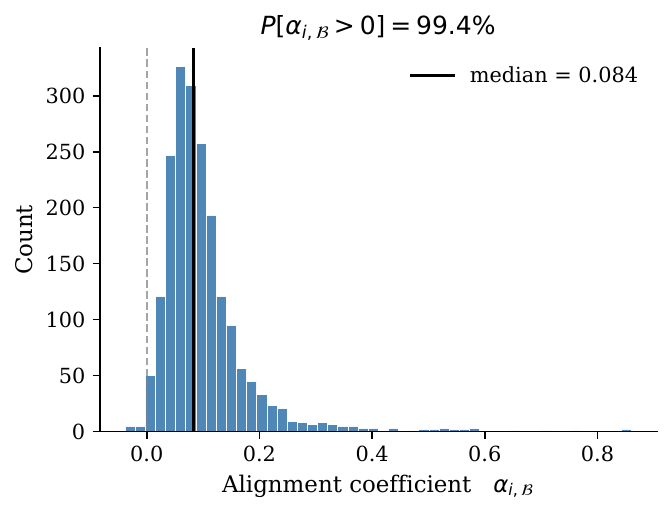}
     \subcaption{Distribution of $\alpha_{i,\mathcal{B}}$.}
         \label{fig:batch-validation-a}
  \end{subfigure}\hfill
  \begin{subfigure}[t]{0.32\linewidth}
    \includegraphics[width=\linewidth]{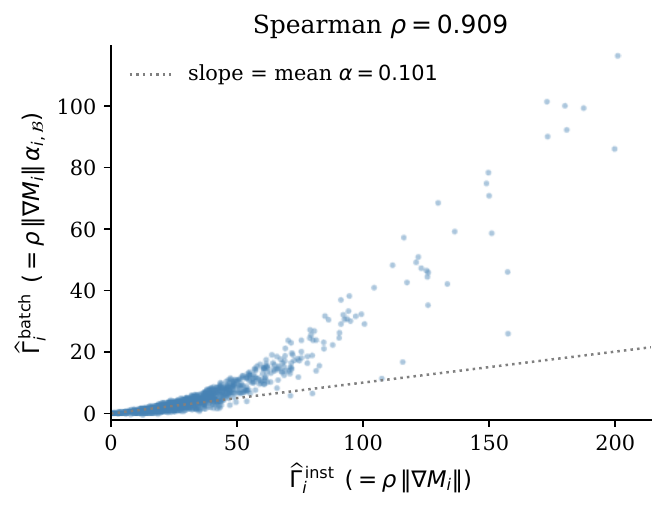}
    \subcaption{Ranking agreement between the batch-directional first-order proxy.}
    \label{fig:batch-validation-b}
  \end{subfigure}\hfill
  \begin{subfigure}[t]{0.32\linewidth}
    \includegraphics[width=\linewidth]{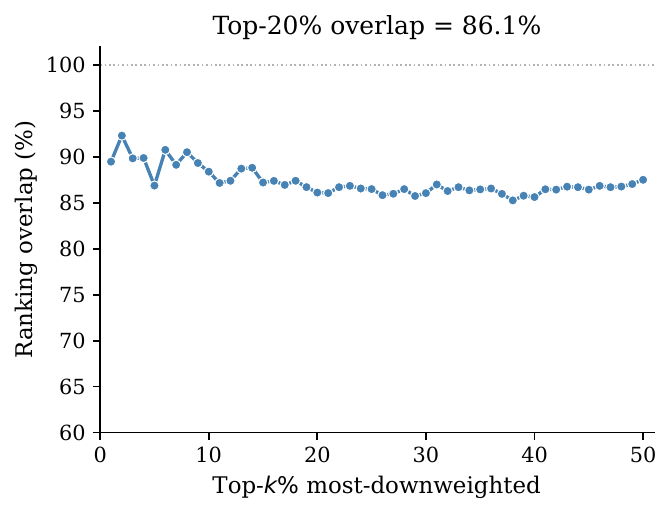}
     \subcaption{Top-$k\%$ overlap of the most-downweighted samples
    between the two rankings.}
\label{fig:batch-validation-c}
  \end{subfigure}
\caption{
\textbf{Batch-wise probe validation.}
\textbf{(a)} Distribution of the alignment coefficient \(\alpha_{i,\mathcal B}\).
\textbf{(b)} Ranking agreement between the batch-directional first-order proxy
 and the instance-directional proxy.
\textbf{(c)} Top-\(k\%\) overlap of the most-downweighted samples between the two rankings.
Detailed analysis is in Appendix~\ref{app:anchor-validation}.
}
  \label{fig:batch-validation}
\end{figure*}



\begin{table}[t]
\centering
\caption{Gradient-norm reweighting ablation on Mistral-Instruct.}
\label{tab:gradnorm_ablation}
\resizebox{0.75\linewidth}{!}{
\begin{tabular}{lcccc}
\toprule
Method & AlpacaEval 2 LC & AlpacaEval 2 WR & GSM8k & ARC-C \\
\midrule
SimPO & 32.1 & 34.8 & 36.6 & 66.3 \\
GradNorm-Reweight & 33.5 & 34.7 & 37.0 & 66.1 \\
GAPO & \textbf{35.7} & \textbf{38.7} & \textbf{40.3} & \textbf{66.3} \\
\bottomrule
\end{tabular}
}
\end{table}
\subsubsection{Gradient magnitude alone is insufficient}
\label{sec:gradnorm-ablation}

To test whether GAPO's gains are explained by gradient magnitude alone, we compare against GradNorm-Reweight, which replaces $\Gamma_i$ with a normalized per-instance margin-gradient norm while keeping the same reweighting form and training setup.

Table~\ref{tab:gradnorm_ablation} shows that GradNorm-Reweight improves over SimPO on some metrics but remains below GAPO, suggesting that norm-only attenuation is insufficient. 
This is consistent with Proposition~\ref{prop:tri-factor}: the Anchor Gap depends not only on gradient magnitude, but also on batch-direction alignment and the directional curvature that together determine how much each margin degrades under the shared pessimistic probe.

\subsection{Additional Analyses}
\label{subsec:ablations-summary}

Appendix~\ref{app:ablations} provides ablations on perturbation strategy, perturbation radius, anchor batch multiplier, perturbation scope, and SimPO+SAM comparisons. 
These analyses support the use of a shared batch-wise probe and help distinguish Anchor-Gap reweighting from uniform optimizer-level flatness-seeking. 
In particular, batch-wise perturbations provide a stronger overall trade-off than instance-wise variants, $\rho=0.05$ gives the best perturbation radius in our setting, an $8\times$ anchor batch multiplier balances directional stability and local sensitivity, and full-model perturbation outperforms LM-head-only probing. 
As an auxiliary prospective check, Appendix~\ref{app:prospective} further shows that low-$\Gamma$ subsets selected at the SFT checkpoint yield a better accuracy--drift trade-off for a separate SimPO learner, suggesting that the Anchor Gap carries useful stability information beyond GAPO's online reweighting.
Appendix~\ref{app:hessian} additionally reports a targeted LM-head curvature probe, which we treat as auxiliary evidence rather than the main explanation.
\paragraph{Computational cost.}
GAPO requires an additional anchor-gradient computation and anchor forward pass, resulting in roughly a \(2\times\) per-step wall-clock overhead relative to SimPO. 
To test whether the gains are explained by extra training time, Figure \ref{fig:efficiency_stability_app} reports a wall-clock-matched comparison on Mistral-7B: two epochs of SimPO reach 33.0 AlpacaEval~2 LC, whereas one epoch of GAPO reaches 35.7. Thus, GAPO's improvement is not explained solely by longer SimPO training under the same recipe.

\begin{figure}[t]
    \centering
    \includegraphics[width=0.47\columnwidth]{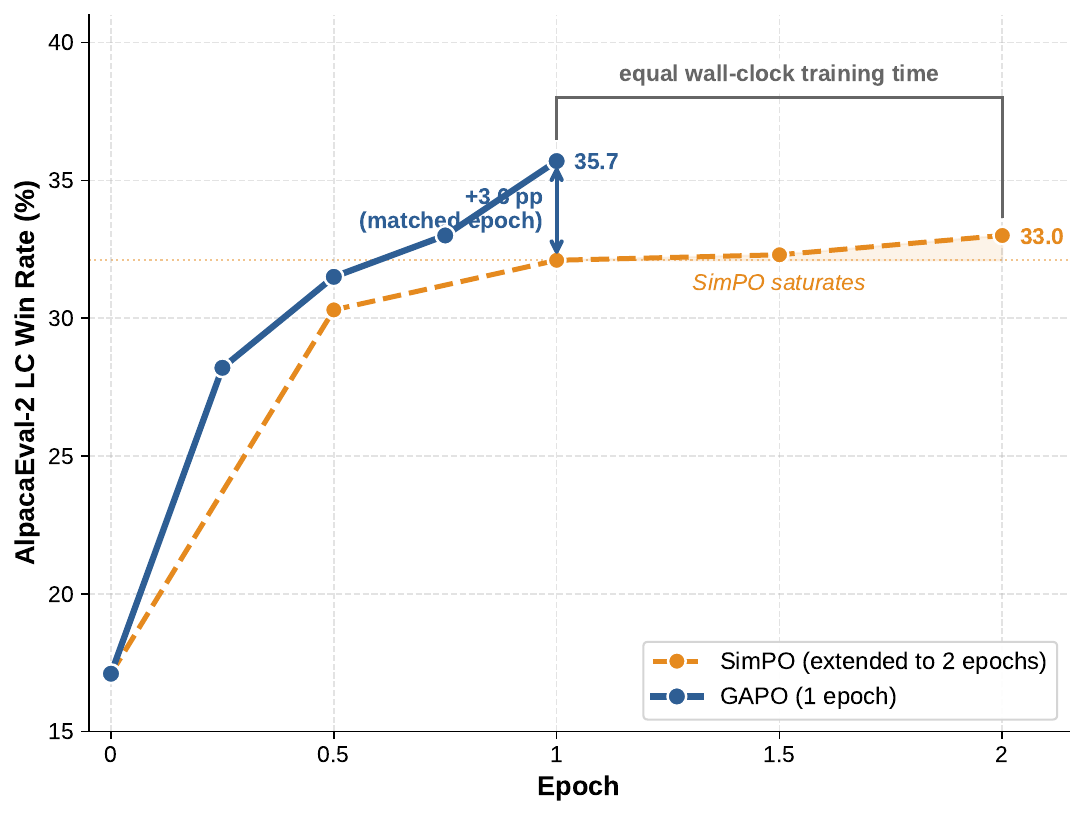}
    \caption{
    Wall-clock-matched diagnostic on Mistral-7B  under the same hardware and training recipe. 
    }
    \label{fig:efficiency_stability_app}
\end{figure}

%% file: tab/main/1_align.tex
\begin{table*}[t!]
    \centering
    \caption{\textbf{Main Results on Instruction-Following Benchmarks.} We compare GAPO with various baselines across four models. GAPO is competitive with or improves over strong baselines across most model-benchmark settings. \textbf{LC}: Length-Controlled, \textbf{WR}: Win Rate.}
    \label{tab:main_results}
        \resizebox{0.75\textwidth}{!}{%
    \begin{tabular}{lcccccc}
        \toprule
        \multirow{3}{*}{\textbf{Method}} & \multicolumn{3}{c}{\textbf{Mistral-Instruct (7B)}} & \multicolumn{3}{c}{\textbf{Llama-3-Base (8B)}} \\
        \cmidrule(lr){2-4} \cmidrule(lr){5-7}
        & \multicolumn{2}{c}{\textbf{AlpacaEval 2}} & \textbf{Arena-Hard} & \multicolumn{2}{c}{\textbf{AlpacaEval 2}} & \textbf{Arena-Hard} \\
        \cmidrule(lr){2-3} \cmidrule(lr){4-4} \cmidrule(lr){5-6} \cmidrule(lr){7-7}
        & \textbf{LC (\%)} & \textbf{WR (\%)} & \textbf{WR (\%)} & \textbf{LC (\%)} & \textbf{WR (\%)} & \textbf{WR (\%)} \\
        \midrule
        SFT & 17.1 & 14.7 & 12.6 & 6.2 & 4.6 & 3.3 \\
        \midrule
        DPO  & 26.8 & 24.9 & 16.3 & 18.2 & 15.5 & 15.9 \\
        IPO & 20.3 & 20.3 & 16.2 & 14.4 & 14.2 & 17.8 \\
        CPO  & 23.8 & 28.8 & 22.6 & 10.8 & 8.1 & 5.8 \\
        KTO & 24.5 & 23.6 & 17.9 & 14.2 & 12.4 & 12.5 \\
        ORPO  & 24.5 & 24.9 & 20.8 & 12.2 & 10.6 & 10.8 \\
        R-DPO  & 27.3 & 24.5 & 16.1 & 17.6 & 14.4 & 17.2 \\
        SimPO & 32.1 & 34.8 & 21.0 & 22.0 & 20.3 & 23.4 \\
        $\alpha$-DPO &32.3  & 32.6 & 21.7 & 18.3& 13.8 & 22.5 \\ 
        \midrule
        \textbf{GAPO} & \textbf{35.7} & \textbf{38.7} & \textbf{22.7} & \textbf{25.0} & \textbf{23.7} & \textbf{30.3} \\
        \bottomrule
    \end{tabular}%
    }
    \resizebox{0.75\textwidth}{!}{
    \begin{tabular}{lcccccc}
        \toprule
        \multirow{3}{*}{\textbf{Method}} & \multicolumn{3}{c}{\textbf{Llama-3-Instruct (8B)}} & \multicolumn{3}{c}{\textbf{Gemma-2-Instruct (9B)}} \\
        \cmidrule(lr){2-4} \cmidrule(lr){5-7}
        & \multicolumn{2}{c}{\textbf{AlpacaEval 2}} & \textbf{Arena-Hard} & \multicolumn{2}{c}{\textbf{AlpacaEval 2}} & \textbf{Arena-Hard} \\
        \cmidrule(lr){2-3} \cmidrule(lr){4-4} \cmidrule(lr){5-6} \cmidrule(lr){7-7}
        & \textbf{LC (\%)} & \textbf{WR (\%)} & \textbf{WR (\%)} & \textbf{LC (\%)} & \textbf{WR (\%)} & \textbf{WR (\%)} \\
        \midrule
        SFT & 26.0 & 25.3 & 22.3 & 48.7 & 36.5 & 42.1\\
        \midrule
        DPO  & 40.3 & 37.9 & 32.6 & 70.4 & 66.9 & 58.8 \\
        IPO  & 35.6 & 35.6 & 30.5 & 62.6 & 58.4 & 53.5 \\
        CPO  & 28.9 & 32.2 & 28.8 & 56.4 & 53.4 & 55.2 \\
        KTO & 33.1 & 31.8 & 26.4 & 61.7 & 55.5 & 53.8 \\
        ORPO  & 28.5 & 27.4 & 25.8 & 56.2 & 46.7 & 46.2 \\
        R-DPO  & 41.1 & 37.8 & 33.1 & 68.3 & 66.9 & 57.9 \\
        SimPO & 44.7 & 40.5 & \textbf{33.8} & 72.4 & 65.0 & 57.8 \\
        $\alpha$-DPO & 46.6 & 38.1 & 33.3 & 73.4& 66.1 & 60.8 \\ 

        \midrule
        \textbf{GAPO} & \textbf{47.4} &\textbf{42.8} & 33.7 & \textbf{74.0} & \textbf{67.7} & \textbf{61.5} \\
        \bottomrule
    \end{tabular}%
    }
\end{table*}

%% file: tab/main/2_reasoning.tex
\begin{table}[h]
    \centering
    \caption{\textbf{Reasoning vs. Alignment Trade-off across Models} Performance comparison on Mistral-Instruct and Llama-3-Base.}
    \label{tab:reasoning}
    \resizebox{0.6\columnwidth}{!}{%
        \begin{tabular}{ll|c|cc}
        \toprule
        \textbf{Model} & \textbf{Method} & \textbf{AlpacaEval 2} & \textbf{GSM8k} & \textbf{ARC-C} \\
        \midrule
        \multirow{5}{*}{\textbf{Mistral-7B}} 
         & DPO & 26.8& 41.8 & 66.7 \\
         & RDPO & 27.3 & \textbf{42.6} &\textbf{67.5}  \\
         & SimPO & 32.1 & 36.6 &66.3 \\
         & $\alpha$-DPO &32.3&29.3&65.2 \\
         & \textbf{GAPO} & \textbf{35.7} & 40.3 & 66.3 \\
        
        \midrule
        \multirow{5}{*}{\textbf{Llama-3-8B}} 
         & DPO & 18.2 & 55.5 & 65.6 \\
         & RDPO & 17.6 & 54.9 & 65.2 \\
         & SimPO & 22.0 & 53.0 & 65.2 \\
         & $\alpha$-DPO &18.3&45.4&66.0 \\
         & \textbf{GAPO} & \textbf{25.0} & \textbf{56.0} & \textbf{67.2} \\
        \bottomrule
        \end{tabular}%
    }
\end{table}

%% file: tex/7_con.tex
\section{Conclusion} \label{sec:conclusion}
We presented \textbf{Geometric Anchor Preference Optimization (GAPO)}, a geometry-aware framework for offline preference optimization. GAPO constructs a pessimistic local anchor around the current policy and uses the resulting \emph{Anchor Gap} to reweight preference pairs according to their local margin degradation along the current batch direction. Our analysis characterizes the Anchor Gap as a batch-directional proxy for local instability, decomposing it into gradient magnitude, batch-direction alignment, and directional curvature. Empirically, GAPO improves alignment across model families while largely preserving reasoning ability, and remains robust under random and structured preference noise. Overall, these results suggest that local geometric degradation provides a useful signal for stable preference optimization beyond fixed-reference anchoring or explicit noise modeling.

%% file: tex/appendix.tex
\appendix
\setcounter{table}{0}
\renewcommand{\thetable}{\Alph{table}}

\setcounter{figure}{0}
\renewcommand{\thefigure}{\Alph{figure}}

\section*{Appendix Overview}
\label{app:overview}

\noindent
The appendix is organized as follows.

\begin{itemize}[leftmargin=2em,itemsep=3pt,topsep=4pt]
  \item \hyperref[app:theory]{\textbf{Appendix A: Theoretical Derivations.}}
    Gradient update derivation, proof of Proposition~\ref{prop:batch-degradation}, and proof of Proposition~\ref{prop:tri-factor} (instance-wise Anchor-Gap decomposition).
  \item \hyperref[app:implementation]{\textbf{Appendix B: Implementation Details.}}
    Training hyperparameters, GAPO pseudocode, and optimization and evaluation setup.
  \item \hyperref[app:main-numerics]{\textbf{Appendix C: Main Experimental Results.}}
    Full numerical results for the noise-robustness experiments in the main text.
  \item \hyperref[app:mechanism-stats]{\textbf{Appendix D: Mechanistic Statistics.}}
    Detailed weight-separation statistics and weight-tier flip rates underlying Figure~\ref{fig:mechanism}.
  \item \hyperref[app:prospective]{\textbf{Appendix E: Prospective Anchor-Gap Validation.}}
    Subset selection protocol, fraction sweep, and training dynamics under anchor-gap selection.
  \item \hyperref[app:anchor-validation]{\textbf{Appendix F: Batch-Wise Anchor Validation.}}
    Ranking-consistency probe compared against instance-wise validation.
  \item \hyperref[app:ablations]{\textbf{Appendix G: Ablations and Comparisons.}}
    Strategy, radius, anchor batch multiplier, and scope ablations; comparison with SimPO+SAM.
  \item \hyperref[app:efficiency]{\textbf{Appendix H: Computational Efficiency.}}
    Wall-clock training dynamics and runtime analysis.
  \item \hyperref[app:hessian]{\textbf{Appendix I: Hessian Eigenspectrum Analysis.}}
    LM-head curvature probe and supporting eigenspectrum results.
\end{itemize}

\section{Theoretical Derivations}
\label{app:theory}

In this section, we provide detailed derivations of the GAPO gradient update and prove Proposition~\ref{prop:batch-degradation}, which characterizes the implemented batch-wise anchor as a batch-directional probe of local margin degradation.

\subsection{Derivation of GAPO Gradient Update Rule}
Recall the GAPO objective function defined in Eq.\ref{eq:GAPO-loss} :
\begin{equation}
    \mathcal{L}(\theta) = -\frac{1}{N} \sum_{i=1}^{N} \log \sigma \left( \beta \Gamma_i(\theta) - \gamma \right),
\end{equation}
where the Anchor Gap is $\Gamma_i(\theta) = M_i(\theta) - \text{sg}(M_i(\tilde{\theta}))$. Note that the anchor term $M_i(\tilde{\theta})$ is treated as a constant (stop-gradient) during the update step.

Let $z_i = \beta \Gamma_i(\theta) - \gamma$. The gradient with respect to $\theta$ is:
\begin{align}
    \nabla_\theta \mathcal{L}(\theta) &= -\frac{1}{N} \sum_{i=1}^{N} \frac{\partial}{\partial \theta} \log \sigma(z_i) \\
    &= -\frac{1}{N} \sum_{i=1}^{N} \left( 1 - \sigma(z_i) \right) \frac{\partial z_i}{\partial \theta} \\
    &= -\frac{1}{N} \sum_{i=1}^{N} \sigma(-z_i) \cdot \beta \nabla_\theta \Gamma_i(\theta).
\end{align}
Since $\nabla_\theta \Gamma_i(\theta) = \nabla_\theta M_i(\theta)$, we substitute $z_i$:
\begin{equation}
    \nabla_\theta \mathcal{L}(\theta) = -\frac{1}{N} \sum_{i=1}^{N} \left[ \beta \sigma \left( -(\beta \Gamma_i(\theta) - \gamma) \right) \right] \nabla_\theta M_i(\theta).
\end{equation}
By defining the instance-dependent weight as $w_i(\theta) = \beta \sigma(\gamma - \beta \Gamma_i(\theta))$, we recover Eq. \ref{eq:gradient} from the main text:
\begin{equation}
    \nabla_\theta \mathcal{L}(\theta) = -\mathbb{E}_{\mathcal{D}} \left[ w_i(\theta) \cdot \nabla_\theta M_i(\theta) \right].
\end{equation}
This confirms that GAPO performs weighted margin maximization, where the weight $w_i$ decays as the instability gap $\Gamma_i$ increases.

\subsection{Proof of Proposition~\ref{prop:batch-degradation}}
\label{proof:thm5.1}
\paragraph{Proof of Proposition~\ref{prop:batch-degradation}.}
Define the one-dimensional function
\[
f(t):=\bar M_{\mathcal B}\bigl(\theta-t\rho v_{\mathcal B}(\theta)\bigr),
\qquad t\in[0,1].
\]
Since \(\bar M_{\mathcal B}\) is three times continuously differentiable on a neighborhood of \(\mathbb B(\theta,\rho)\), the third-order Taylor expansion of \(f\) around \(t=0\) yields
\[
f(1)
=
f(0)+f'(0)+\frac{1}{2}f''(0)+R_{\mathcal B}(\theta,\rho),
\]
where the remainder satisfies
\[
|R_{\mathcal B}(\theta,\rho)|
\le
\frac{L_{\mathcal B}}{6}\rho^3.
\]

We now compute the derivatives of \(f\). First,
\[
f(0)=\bar M_{\mathcal B}(\theta).
\]
By the chain rule,
\[
f'(t)
=
-\rho\,
\Big\langle
\nabla_\theta \bar M_{\mathcal B}\bigl(\theta-t\rho v_{\mathcal B}(\theta)\bigr),
\,v_{\mathcal B}(\theta)
\Big\rangle,
\]
and therefore
\[
f'(0)
=
-\rho\,
\langle g_{\mathcal B}(\theta),v_{\mathcal B}(\theta)\rangle
=
-\rho\,\|g_{\mathcal B}(\theta)\|_2,
\]
since \(v_{\mathcal B}(\theta)=g_{\mathcal B}(\theta)/\|g_{\mathcal B}(\theta)\|_2\).

Differentiating once more,
\[
f''(t)
=
\rho^2\,
v_{\mathcal B}(\theta)^\top
\nabla_\theta^2 \bar M_{\mathcal B}\bigl(\theta-t\rho v_{\mathcal B}(\theta)\bigr)
v_{\mathcal B}(\theta),
\]
so
\[
f''(0)
=
\rho^2\,
v_{\mathcal B}(\theta)^\top
\nabla_\theta^2 \bar M_{\mathcal B}(\theta)
v_{\mathcal B}(\theta).
\]

Substituting these into the Taylor expansion gives
\[
\bar M_{\mathcal B}(\tilde\theta)
=
\bar M_{\mathcal B}(\theta)
-\rho \|g_{\mathcal B}(\theta)\|_2
+\frac{\rho^2}{2}
v_{\mathcal B}(\theta)^\top
\nabla_\theta^2 \bar M_{\mathcal B}(\theta)
v_{\mathcal B}(\theta)
+R_{\mathcal B}(\theta,\rho).
\]
Rearranging yields
\[
\bar M_{\mathcal B}(\theta)-\bar M_{\mathcal B}(\tilde\theta)
=
\rho \|g_{\mathcal B}(\theta)\|_2
-
\frac{\rho^2}{2}
v_{\mathcal B}(\theta)^\top
\nabla_\theta^2 \bar M_{\mathcal B}(\theta)
v_{\mathcal B}(\theta)
+R_{\mathcal B}(\theta,\rho),
\]
with \(|R_{\mathcal B}(\theta,\rho)|\le \frac{L_{\mathcal B}}{6}\rho^3\), which proves the claimed expansion.

Finally, let \(u\) be any unit vector. The first-order decrease of the batch-average margin along the perturbation \(\theta-\rho u\) is
\[
\rho \langle g_{\mathcal B}(\theta),u\rangle.
\]
By the Cauchy--Schwarz inequality,
\[
\langle g_{\mathcal B}(\theta),u\rangle
\le
\|g_{\mathcal B}(\theta)\|_2\|u\|_2
=
\|g_{\mathcal B}(\theta)\|_2,
\]
with equality if and only if \(u\) is parallel to \(g_{\mathcal B}(\theta)\), i.e., \(u=v_{\mathcal B}(\theta)\). Hence \(v_{\mathcal B}(\theta)\) maximizes the first-order decrease of the batch-average margin among all unit directions.
\hfill\(\square\)

\begin{corollary}[Spectral upper bound]
\label{cor:spectral}
Under the assumptions of Proposition~\ref{prop:batch-degradation},
\begin{equation}
\bar M_{\mathcal B}(\theta)-\bar M_{\mathcal B}(\tilde\theta)
\ge
\rho \|\nabla_\theta \bar M_{\mathcal B}(\theta)\|_2
-
\frac{\rho^2}{2}
\lambda_{\max}\!\big(\nabla_\theta^2 \bar M_{\mathcal B}(\theta)\big)
-
\frac{L_{\mathcal B}}{6}\rho^3,
\end{equation}
where $L_{\mathcal B}$ is the Lipschitz constant of $\nabla_\theta^2 \bar M_{\mathcal B}$ on $\mathbb B(\theta,\rho)$ used in the proof of Proposition~\ref{prop:batch-degradation}.
\end{corollary}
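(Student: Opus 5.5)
The bound follows directly from the exact expansion established in the proof of Proposition~\ref{prop:batch-degradation}, without re-deriving any Taylor expansion. That proof gives the identity
\[
\bar M_{\mathcal B}(\theta)-\bar M_{\mathcal B}(\tilde\theta)
=
\rho \|g_{\mathcal B}(\theta)\|_2
-
\frac{\rho^2}{2}\,
v_{\mathcal B}(\theta)^\top \nabla_\theta^2 \bar M_{\mathcal B}(\theta)\, v_{\mathcal B}(\theta)
+R_{\mathcal B}(\theta,\rho),
\]
together with the explicit remainder bound $|R_{\mathcal B}(\theta,\rho)|\le \frac{L_{\mathcal B}}{6}\rho^3$. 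The plan is to lower-bound the second and third terms on the right separately.

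For the curvature term, the Hessian $\nabla_\theta^2 \bar M_{\mathcal B}(\theta)$ is symmetric, since $\bar M_{\mathcal B}$ is $C^2$ (indeed $C^3$) near $\theta$. By the Rayleigh quotient characterization, every unit vector $u$ satisfies $u^\top \nabla_\theta^2 \bar M_{\mathcal B}(\theta) u \le \lambda_{\max}(\nabla_\theta^2 \bar M_{\mathcal B}(\theta))$. The vector $v_{\mathcal B}(\theta)$ is unit norm by construction, which requires $g_{\mathcal B}(\theta)\neq 0$; the anchor in Eq.~\eqref{eq:perturbation} is defined only in that case. Multiplying the inequality by $-\rho^2/2<0$ reverses it, so
\[
-\frac{\rho^2}{2}\kappa \ge -\frac{\rho^2}{2}\lambda_{\max}, \qquad \kappa := v_{\mathcal B}(\theta)^\top \nabla_\theta^2 \bar M_{\mathcal B}(\theta)\, v_{\mathcal B}(\theta).
\]

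For the remainder, I use only $R_{\mathcal B}\ge -|R_{\mathcal B}|\ge -\frac{L_{\mathcal B}}{6}\rho^3$. Summing the two lower bounds with the unchanged first-order term $\rho\|\nabla_\theta \bar M_{\mathcal B}(\theta)\|_2$ gives the stated inequality.

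The only step needing care is the remainder constant. The Lagrange/integral remainder of $f(t)=\bar M_{\mathcal B}(\theta-t\rho v_{\mathcal B})$ is $\frac{1}{6}f'''(\xi)$, with $|f'''|\le \rho^3 \sup\|\nabla^3\bar M_{\mathcal B}\|$. This equals $L_{\mathcal B}\rho^3$ when $L_{\mathcal B}$ is the Lipschitz constant of the Hessian on $\mathbb B(\theta,\rho)$. The same constant also follows from the integral form $\int_0^1 (1-t)\big(f''(t)-f''(0)\big)\,dt$ using Lipschitzness alone; that route gives $\rho^3 L_{\mathcal B}/6$ after a careful integration. I would check that this matches the constant used in the proof of Proposition~\ref{prop:batch-degradation} and otherwise take it as given. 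The corollary is essentially a one-line consequence, and the argument does not need $\rho$ small beyond the neighborhood on which the smoothness assumptions hold.
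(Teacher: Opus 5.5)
Your proposal is correct and follows the paper's proof essentially verbatim. Both start from the equality form in Proposition~\ref{prop:batch-degradation} with $|R_{\mathcal B}(\theta,\rho)|\le \frac{L_{\mathcal B}}{6}\rho^3$, bound $v_{\mathcal B}(\theta)^\top \nabla_\theta^2 \bar M_{\mathcal B}(\theta)\, v_{\mathcal B}(\theta)\le \lambda_{\max}\big(\nabla_\theta^2 \bar M_{\mathcal B}(\theta)\big)$ by the Rayleigh quotient, and use $R_{\mathcal B}\ge -\frac{L_{\mathcal B}}{6}\rho^3$. Your extra check of the remainder constant via the integral form, $L_{\mathcal B}\rho^3\int_0^1 t(1-t)\,dt = L_{\mathcal B}\rho^3/6$, is also correct and fills in a step the paper only asserts.
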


\paragraph{Proof of Corollary~\ref{cor:spectral}.}
Let
\[
H_{\mathcal B}(\theta):=\nabla_\theta^2 \bar M_{\mathcal B}(\theta).
\]
Since \(H_{\mathcal B}(\theta)\) is symmetric, the Rayleigh quotient satisfies
\[
v^\top H_{\mathcal B}(\theta) v
\le
\lambda_{\max}\!\bigl(H_{\mathcal B}(\theta)\bigr)
\qquad
\text{for every unit vector } v.
\]
Applying this to \(v=v_{\mathcal B}(\theta)\) gives
\[
v_{\mathcal B}(\theta)^\top
\nabla_\theta^2 \bar M_{\mathcal B}(\theta)
v_{\mathcal B}(\theta)
\le
\lambda_{\max}\!\bigl(\nabla_\theta^2 \bar M_{\mathcal B}(\theta)\bigr).
\]

From Proposition~\ref{prop:batch-degradation}, the equality form gives
\[
\bar M_{\mathcal B}(\theta)-\bar M_{\mathcal B}(\tilde\theta)
=
\rho \|g_{\mathcal B}(\theta)\|_2
-
\frac{\rho^2}{2}
v_{\mathcal B}(\theta)^\top
\nabla_\theta^2 \bar M_{\mathcal B}(\theta)
v_{\mathcal B}(\theta)
+R_{\mathcal B}(\theta,\rho),
\]
with $|R_{\mathcal B}(\theta,\rho)|\le \frac{L_{\mathcal B}}{6}\rho^3$, hence $R_{\mathcal B}(\theta,\rho)\ge -\frac{L_{\mathcal B}}{6}\rho^3$. Substituting the Rayleigh quotient bound into the second-order term yields
\[
\bar M_{\mathcal B}(\theta)-\bar M_{\mathcal B}(\tilde\theta)
\ge
\rho \|g_{\mathcal B}(\theta)\|_2
-
\frac{\rho^2}{2}
\lambda_{\max}\!\bigl(\nabla_\theta^2 \bar M_{\mathcal B}(\theta)\bigr)
-\frac{L_{\mathcal B}}{6}\rho^3,
\]
which proves the claim.
\hfill\(\square\)

\paragraph{Geometric interpretation.}
The expansion should be interpreted as a local directional sensitivity analysis. 
The first-order term captures how sensitive the batch-average margin is along the shared batch-induced adversarial probe, while the second-order term captures directional curvature along the same probe.

\subsection{Proof of Proposition~\ref{prop:tri-factor} (Per-Instance Decomposition under the Batch-Wise Anchor)}
\label{app:instance-decomp}

\begin{proof}[Proof of Proposition~\ref{prop:tri-factor}]
For an instance $i\in\mathcal B$, the Anchor Gap under the implemented batch-wise anchor is
\begin{equation}
    \Gamma_i(\theta)
    :=
    M_i(\theta)-M_i(\tilde\theta).
\end{equation}
Assume that $M_i(\theta)$ is three times continuously differentiable. Then
\begin{equation}
    \Gamma_i(\theta)
    =
    \rho \langle \nabla_\theta M_i(\theta), v_{\mathcal B}(\theta)\rangle
    -
    \frac{\rho^2}{2}
    v_{\mathcal B}(\theta)^\top
    \nabla_\theta^2 M_i(\theta)
    v_{\mathcal B}(\theta)
    + O(\rho^3).
\end{equation}

Define the alignment coefficient
\begin{equation}
    \alpha_{i,\mathcal B}(\theta)
    :=
    \frac{
        \langle \nabla_\theta M_i(\theta), g_{\mathcal B}(\theta)\rangle
    }{
        \|\nabla_\theta M_i(\theta)\|_2\,\|g_{\mathcal B}(\theta)\|_2
    }.
\end{equation}
Then
\begin{equation}\label{eq:instance-gap-expansion}
    \Gamma_i(\theta)
    =
    \rho \|\nabla_\theta M_i(\theta)\|_2 \alpha_{i,\mathcal B}(\theta)
    -
    \frac{\rho^2}{2}\kappa_{i,\mathcal B}(\theta)
    + O(\rho^3),
\end{equation}
where
\begin{equation}
    \kappa_{i,\mathcal B}(\theta)
    :=
    v_{\mathcal B}(\theta)^\top
    \nabla_\theta^2 M_i(\theta)
    v_{\mathcal B}(\theta)
\end{equation}
is the directional curvature of instance $i$ along the batch adversarial direction. This is the form stated in Proposition~\ref{prop:tri-factor}.
\end{proof}

\paragraph{Interpretation.}
This decomposition shows that the Anchor Gap is governed by three factors: the local gradient magnitude of the instance, its alignment with the current batch-induced direction, and its directional curvature under the shared adversarial probe. Consequently, the GAPO weight
\begin{equation}
    w_i(\theta)=\beta\,\sigma(\gamma-\beta\Gamma_i(\theta))
\end{equation}
should not be interpreted as an absolute estimate of data quality or intrinsic noise. Rather, $\Gamma_i(\theta)$ measures how much instance $i$'s margin degrades along the current batch direction, and $w_i(\theta)$ acts as an instance-wise weight under the implemented batch-level adversarial probe: instances whose margins degrade substantially along the current batch direction (large $\Gamma_i$) receive smaller updates, while instances stable under the same probe receive larger Anchor-Gap-based weights while preserving their preference-gradient direction.

\section{Experimental Setup and Implementation Details}
\label{app:implementation}

\subsection{Models and Datasets}
\label{app:models-datasets}

We evaluate GAPO on a representative set of open-weight models commonly used in alignment research. Specifically, we utilize Mistral-Instruct-v0.2 (7B), Llama-3-Base/Instruct (8B)~\cite{llama3modelcard}, and Gemma-2-Instruct (9B)~\cite{team2024gemma} as our backbone models.
For the main alignment experiments, we follow the standard protocol of SimPO~\cite{meng2024simpo} and train on the UltraFeedback dataset~\cite{cui2023ultrafeedback}.
To evaluate robustness under noisy preferences, we follow the experimental protocol of Dr.~DPO~\cite{wu2024towards}. We train Pythia-2.8B~\cite{biderman2023pythia} on the Anthropic HH dataset~\citep{bai2022training} and inject synthetic pairwise noise by flipping preference labels at varying rates.

\subsection{Baselines}
\label{app:baselines}
We compare GAPO against a comprehensive suite of offline alignment methods, covering both reference-based and reference-free paradigms:
DPO~\cite{rafailov2023direct}, IPO~\cite{azar2024general}, KTO~\cite{ethayarajh2024kto}, CPO~\cite{xu2024contrastive}, ORPO~\cite{hong2024orpo}, R-DPO~\cite{park2024disentangling}, SimPO~\cite{meng2024simpo} and $\alpha$-DPO~\cite{wu2025alphadpo}.

\subsection{Evaluation Metrics}
\label{app:metrics}
We assess instruction-following capability using AlpacaEval 2.0~\cite{alpaca_eval} and Arena-Hard v0.1~\cite{arenahard2024}, which correlate highly with human judgement.
To evaluate the impact on general reasoning capabilities (the "alignment tax"), we report performance on standard benchmarks including GSM8k~\cite{cobbe2021trainingverifierssolvemath} and ARC-C~\cite{clark2018thinksolvedquestionanswering}.

\subsection{Training Hyperparameters}
We provide the detailed hyperparameters used for training GAPO and baselines across all models in Table~\ref{tab:hyperparams}, following the SimPO setting~\cite{meng2024simpo}. We utilized 2$\times$B200 GPUs for training the 7B/8B/9B models.
\begin{table}[h]
    \centering
    \caption{Hyperparameters for GAPO and baselines.}
    \label{tab:hyperparams}
    \begin{tabular}{l|c c c c }
        \toprule
        \textbf{Hyperparameter} & \textbf{Mistral Inst}& \textbf{Llama-3 Base} & \textbf{Llama-3 Instruct} & \textbf{Gemma-2} \\
        \midrule
        Learning Rate & 5e-7 & 6e-7 & 1e-6& 8e-7 \\
        Global Batch Size & 128 & 128 & 128& 128 \\
        LR Scheduler & Cosine & Cosine & Cosine & Cosine \\
        Warmup Ratio & 0.1 & 0.1 & 0.1& 0.1 \\
        Max Length & 2048 & 2048 & 2048 & 2048 \\
        \midrule
        \textbf{GAPO Specifics} & & & &\\
        Beta ($\beta$) & 2.5 & 2.0 & 2.5& 10 \\
        Gamma ($\gamma$) & 0.1 & 0.5& 0.55 & 0.5 \\
        Perturbation $\rho$ & 0.05 & 0.05 & 0.05 & 0.05 \\
        Anchor Update & Batch-wise & Batch-wise & Batch-wise& Batch-wise \\
        \bottomrule
    \end{tabular}
\end{table}

\subsection{GAPO Pseudo-code}
\label{app:pseudocode}
Algorithm~\ref{alg:gapo} illustrates the PyTorch-style implementation of the Geometric Anchor construction and loss calculation.

\begin{algorithm}[t]
\caption{GAPO Pseudo-code}
\label{alg:gapo}
\begin{lstlisting}[language=Python, basicstyle=\footnotesize\ttfamily, xleftmargin=2em, aboveskip=5pt, belowskip=5pt]
def gapo_loss(model, batch, beta, gamma, rho):
    # 1. Forward Pass & Margin calculation
    logps_w, logps_l = get_log_probs(model, batch)
    margin = (logps_w - logps_l).mean()

    # 2. Compute Adversarial Perturbation
    grads = torch.autograd.grad(-margin, model.parameters())
    gnorm = torch.norm(torch.stack([g.norm() for g in grads]))
    eps = {p: rho * (g / (gnorm + 1e-8)) 
        for p, g in zip(model.parameters(), grads)}

    # 3. Apply Perturbation & Compute Anchor Margin
    with torch.no_grad():
        for p in model.parameters(): p.add_(eps[p])
        anchor_w, anchor_l = get_log_probs(model, batch)
        m_anchor = (anchor_w - anchor_l).detach()
        for p in model.parameters(): p.sub_(eps[p]) # Restore

    # 4. Compute GAPO Loss
    gap = (logps_w - logps_l) - m_anchor
    return -F.logsigmoid(beta * gap - gamma).mean()
\end{lstlisting}
\end{algorithm}

\section{Main Experiment Details and Numerical Results}
\label{app:main-numerics}

\subsection{Noise Robustness: Numerical Results}
\label{app:noise_table}

For reproducibility, Table~\ref{tab:noise_robustness} reports the precise reward-accuracy values plotted in Figure~\ref{fig:noise_robustness}. All values are evaluated on the Pythia-2.8B model trained on the Anthropic HH dataset, following the noisy-preference protocol of Dr.DPO~\cite{wu2024towards}.

\begin{table}[h]
\centering
\renewcommand{\arraystretch}{1.2}
\caption{\textbf{Robustness against label noise: numerical results.} Reward accuracy (\%) of DPO, Dr.DPO, and GAPO under random and length-dependent label flips at five flip rates. Best per row in bold.}
\label{tab:noise_robustness}
\resizebox{0.65\columnwidth}{!}{%
\begin{tabular}{llccccc}
\toprule
Flip & Method & 0\% & 10\% & 20\% & 30\% & 40\% \\
\midrule
\multirow{3}{*}{Random}
 & DPO    & 63.3 & 62.3 & 62.9 & 58.5 & 57.0 \\
 & Dr.DPO & 66.2 & 65.4 & 64.2 & 62.7 & 58.8 \\
 & \textbf{GAPO}  & \textbf{66.8} & \textbf{65.6} & \textbf{64.9} & \textbf{62.8} & \textbf{59.8} \\
\midrule
\multirow{3}{*}{\makecell[l]{Length-\\dependent}}
 & DPO    & 63.3 & 51.7 & 54.3 & 54.3 & \textbf{52.0} \\
 & Dr.DPO & 66.2 & 62.1 & 49.2 & 48.3 & 44.9 \\
 & \textbf{GAPO}  & \textbf{66.8} & \textbf{63.3} & \textbf{58.3} & \textbf{56.6} & \textbf{52.0} \\
\bottomrule
\end{tabular}%
}
\end{table}

\section{Detailed Mechanistic Statistics for Figure~\ref{fig:mechanism}}
\label{app:mechanism-stats}

This appendix reports detailed statistics underlying
Figure~\ref{fig:mechanism}. We track $2,000$ paired preferences across
five checkpoints spanning the first epoch
($\{0.00,\,0.25,\,0.50,\,0.75,\,1.00\}$ epoch) with $20\%$ random preference flips. The same samples and the
same flip labels are used at every checkpoint, so changes in
separability reflect changes in the model's Anchor-Gap-based reweighting
rather than sample variation. 

\paragraph{Weight separation between clean and flipped pairs.}
Figure~\ref{fig:mechanism}a shows that flipped preferences receive
progressively smaller GAPO weights than clean preferences during training.
At the final checkpoint, flipped pairs receive a $16.9\%$ smaller mean
weight on average. Because the GAPO weight is
$w_i = \beta\,\sigma(\gamma - \beta\Gamma_i)$, this indicates that flipped
pairs tend to develop larger Anchor Gaps during training. 

\paragraph{Weight-tier flip rates.}
Figure~\ref{fig:mechanism}b stratifies examples directly by the GAPO
weight $w_i$, using the bottom $20\%$, middle $60\%$, and top $20\%$
weight tiers at each checkpoint. The bottom $20\%$ tier corresponds to
the pairs most strongly down-weighted by GAPO, while the top $20\%$ tier
corresponds to the least down-weighted pairs. By the final checkpoint,
the bottom $20\%$ weight tier reaches a flip rate of $30.1\%$, while the
top $20\%$ tier drops to $16.9\%$, yielding a $+13.1$ percentage-point
gap. Thus, the pairs receiving the smallest GAPO weights are
statistically enriched in corrupted supervision in this controlled
setting.

\paragraph{Interpretation and limitation.}
These results support the Anchor-Gap interpretation: GAPO's scalar
reweighting tends to reduce the influence of pairs that are more likely
to be corrupted under controlled random-flip noise. However, the
enrichment is moderate rather than deterministic. Even in the most
down-weighted tier, most pairs are still clean, so the GAPO weight
should not be interpreted as a noise classifier or an absolute
data-quality score. Instead, it is a risk-aware training signal that
statistically biases smaller update weights toward locally brittle
supervision.

\section{Auxiliary Prospective Anchor-Gap Validation}
\label{app:prospective}

\subsection{Subset Selection by Anchor Gap}
\label{app:c1_subset_selection}

Although GAPO is designed for per-instance Anchor-Gap-based reweighting within batch context, the resulting Anchor Gap also induces a per-instance ranking that is informative as a static signal. We do not use this ranking as a standalone data-valuation method, but we report it as evidence that the geometric signal computed at the SFT checkpoint carries useful information about drift-efficient preference learning, complementing the in-training mechanism analysis of Section~\ref{sec:mechanism-empirical}.

\input{tab/main/4_valuation}

\begin{figure}[h]
\centering
\includegraphics[width=0.55\linewidth]{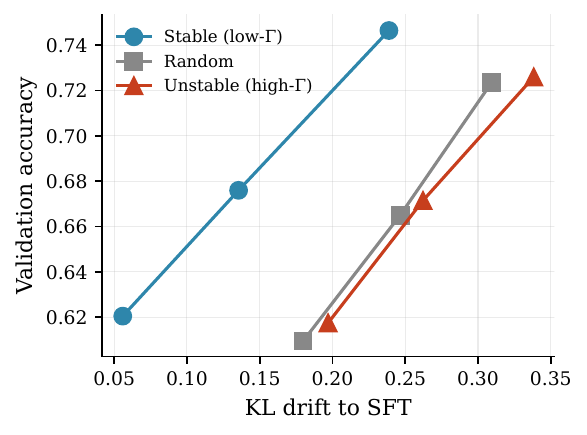}
\caption{\textbf{Prospective validation of the Anchor Gap.} We
compute $\Gamma_i$ once at the SFT checkpoint, select low-$\Gamma$,
random, and high-$\Gamma$ subsets at three data fractions, and train
SimPO on each subset.}
\label{fig:prospective_pareto}
\end{figure}

\paragraph{Setup.}
We sort the training instances by $\Gamma$ and construct three subsets (each $30\%$ of the data):
(1) Stable Subset (Lowest $\Gamma$): geometrically robust instances along the batch direction;
(2) Unstable Subset (Highest $\Gamma$): locally brittle instances under the same probe; and
(3) Random Subset: a baseline.
To avoid circularity (i.e., pruning with GAPO and then re-training GAPO on the pruned data), we train a fixed reference-free learner (SimPO) on each subset under identical hyperparameters. This isolates the effect of Anchor-Gap-based ranking and evaluates whether the induced ordering is informative beyond GAPO's own objective.

\paragraph{Results.}
Table~\ref{tab:data_efficiency} shows a clear ranking: training on the Stable subset yields the best performance, Random is intermediate, and the Unstable subset is the weakest. This indicates that, as an offline ranking signal on a converged checkpoint, instances with large Anchor Gaps tend to be less useful for downstream alignment under SimPO, consistent with their interpretation as locally brittle along the batch direction. We emphasize that this is an auxiliary observation: the main mechanism of GAPO remains training-time Anchor-Gap-based reweighting (Section~\ref{sec:mechanism-empirical}), of which any usefulness as an offline ranking signal is a secondary by-product.

\subsection{Training Dynamics on Anchor-Gap Subsets}
\label{app:fraction_sweep}

To understand \emph{why} the Anchor-Gap-based ranking produces the prospective effect reported in \S\ref{app:c1_subset_selection}, we extend the $f{=}30\%$ experiment of \S\ref{app:c1_subset_selection} to three fractions ($f \in \{30\%, 50\%, 70\%\}$) and analyze training dynamics across all nine fraction--subset combinations.

\paragraph{Full results.} Table~\ref{tab:fraction_sweep} reports validation reward accuracy, validation reward margin, and KL drift to the SFT model for all nine combinations, measured on a validation set of $2{,}000$ preference pairs.

\begin{table}[h]
\centering
\caption{\textbf{Full prospective sweep results.} Validation metrics ($n=2{,}000$ preference pairs) and KL drift from the SFT model, measured after training SimPO on Anchor-Gap-selected subsets. Best accuracy and lowest KL drift per fraction shown in bold.}
\label{tab:fraction_sweep}
\small
\begin{tabular}{lcccc}
\toprule
Subset & Fraction & Val Acc & Val Margin & KL drift \\
\midrule
Stable (low-$\Gamma$)    & 30\% & \textbf{0.621} & 0.043 & \textbf{0.056} \\
Random                   & 30\% & 0.610          & 0.045 & 0.180 \\
Unstable (high-$\Gamma$) & 30\% & 0.618          & 0.060 & 0.197 \\
\midrule
Stable (low-$\Gamma$)    & 50\% & \textbf{0.676} & 0.072 & \textbf{0.135} \\
Random                   & 50\% & 0.665          & 0.084 & 0.247 \\
Unstable (high-$\Gamma$) & 50\% & 0.672          & 0.096 & 0.262 \\
\midrule
Stable (low-$\Gamma$)    & 70\% & \textbf{0.747} & 0.121 & \textbf{0.239} \\
Random                   & 70\% & 0.724          & 0.117 & 0.309 \\
Unstable (high-$\Gamma$) & 70\% & 0.726          & 0.133 & 0.338 \\
\bottomrule
\end{tabular}
\end{table}

The KL drift ordering is strictly monotonic at every fraction (low-$\Gamma$ < random < high-$\Gamma$), and the low-$\Gamma$ subset attains the highest validation accuracy at every fraction. The accuracy advantage grows from $+1.1$pp at $f{=}30\%$ to $+2.3$pp at $f{=}70\%$, while the KL ratio (random/low-$\Gamma$) narrows from $3.22\times$ to $1.30\times$ over the same range.

\paragraph{Training dynamics.} Figure~\ref{fig:appendix_fraction_sweep} provides a comprehensive view across mechanism and outcome metrics. The top row characterizes training-time dynamics: (a) the SFT-time chosen reward $r_w$ is consistently higher (less negative) for low-$\Gamma$ subsets at every fraction, indicating these pairs lie closer to the SFT distribution; (b) the mean training gradient norm is substantially lower for low-$\Gamma$ subsets at every fraction; (c) the rate of large gradient excursions ($\|\nabla\| > 100$) is dramatically higher for high-$\Gamma$ subsets, especially at smaller fractions ($50\%$ of high-$\Gamma$ steps at $f{=}30\%$ vs.\ $0\%$ for low-$\Gamma$); (d) high-$\Gamma$ subsets achieve the largest training-loss reductions, indicating that these pairs do drive strong (but volatile) training signal. The bottom row reports outcomes: (e) validation accuracy is highest for low-$\Gamma$ at every fraction; (f) validation margin is highest for high-$\Gamma$, illustrating that larger training signal translates into larger output margins but not into better classification accuracy; (g) KL drift ranks low-$\Gamma$ < random < high-$\Gamma$ across all fractions; (h) the accuracy-vs-drift scatter summarizes the joint relationship.

\begin{figure}[h]
\centering
\includegraphics[width=\linewidth]{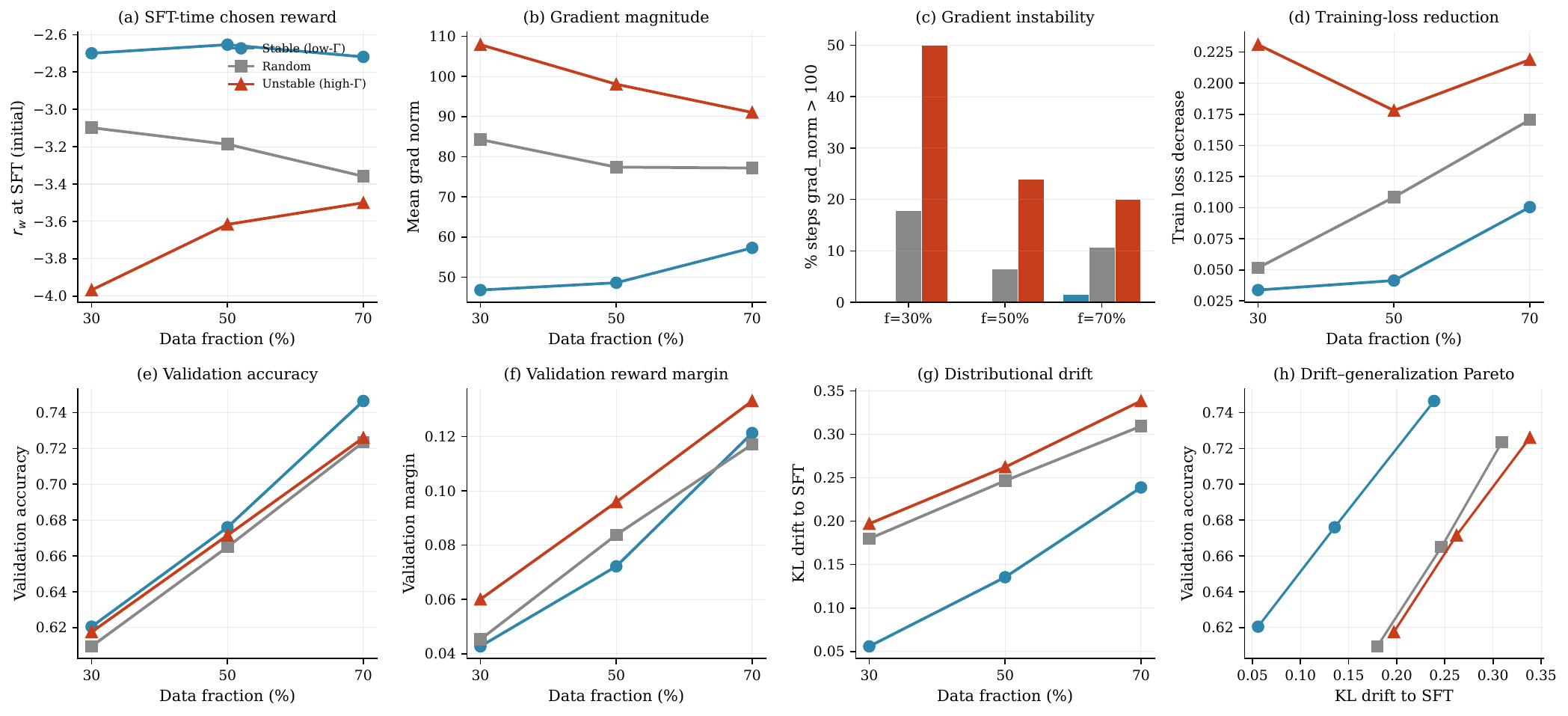}
\caption{\textbf{Training dynamics and outcomes across the fraction--subset grid.} Top row (a--d): training-time mechanism metrics. Bottom row (e--h): post-training outcome metrics. Across all fractions, low-$\Gamma$ subsets exhibit gentler training dynamics yet achieve higher validation accuracy at lower KL drift.}
\label{fig:appendix_fraction_sweep}
\end{figure}

\paragraph{Gradient-norm trajectories.} Figure~\ref{fig:appendix_grad_trajectory} shows full gradient-norm time series during training, smoothed with a window of 3 steps. High-$\Gamma$ subsets exhibit consistently elevated gradient norms throughout training, with isolated extreme excursions (max $\approx 357$ at $f{=}50\%$). Low-$\Gamma$ subsets remain bounded throughout, supporting the interpretation that the Anchor Gap measures local update sensitivity.

\begin{figure}[h]
\centering
\includegraphics[width=\linewidth]{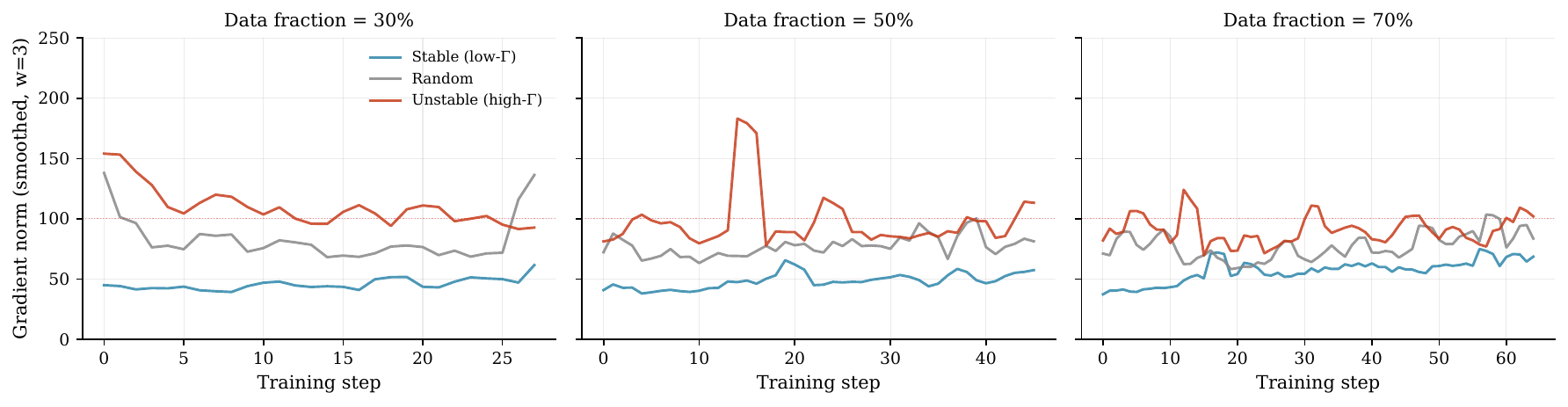}
\caption{\textbf{Gradient-norm trajectories across fractions.} Smoothed gradient norm (window $=3$) during SimPO training on Anchor-Gap-selected subsets. The dotted red line marks $\|\nabla\| = 100$. High-$\Gamma$ subsets exhibit systematically elevated gradients across all fractions, while low-$\Gamma$ subsets remain bounded.}
\label{fig:appendix_grad_trajectory}
\end{figure}

\paragraph{Interpretation.} High-$\Gamma$ subsets are not ``unlearnable'' pairs---they drive the largest training-loss reductions and the largest training reward margins. However, this aggressive training signal does not translate to validation accuracy and comes paired with substantially larger KL drift from the SFT model. Low-$\Gamma$ subsets yield smoother training and more transferable improvements. This pattern, observed across all three fractions, supports interpreting the Anchor Gap as a risk-aware signal for drift-efficient preference learning, complementing the during-training reweighting analyzed in \S\ref{sec:retrospective}.

\section{Batch-Wise Anchor Validation}
\label{app:anchor-validation}

Proposition~\ref{prop:batch-degradation} characterizes how the batch-wise anchor
probes local margin degradation along a shared direction $v_{\mathcal B}$,
rather than along each instance's own worst-case direction
$v_i^{\star} = \nabla_\theta M_i / \|\nabla_\theta M_i\|_2$. A natural
question is whether this batch-shared direction is sufficiently informative
about instance-wise brittleness in practice. We address this empirically.

\paragraph{Setup.}
On a GAPO-trained Mistral-7B-Instruct checkpoint we record
$\alpha_{i,\mathcal B}$ and $\|\nabla_\theta M_i\|_2$ for $1{,}984$
preference pairs distributed over $31$ mini-batches. Using the
first-order expansion in Eq.~\eqref{eq:instance-gap-expansion}, we form a
batch-directional proxy
\[
\widehat{\Gamma}_i^{\text{batch}}
\;:=\; \rho \,\|\nabla_\theta M_i\|_2\, \alpha_{i,\mathcal B},
\]
and an instance-directional counterpart obtained by formally replacing
$v_{\mathcal B}$ with $v_i^{\star}$ in the same expansion,
\[
\widehat{\Gamma}_i^{\text{inst}}
\;:=\; \rho \,\|\nabla_\theta M_i\|_2 .
\]
Their ratio equals $\alpha_{i,\mathcal B}$, isolating the cost of sharing
a single direction across the batch. We emphasize that
$\widehat{\Gamma}_i^{\text{inst}}$ is a first-order proxy, not an executed
instance-wise perturbation.

\paragraph{Findings.}
Across the measured instances, $99.4\%$ satisfy $\alpha_{i,\mathcal B} > 0$,
with median $0.084$. The batch-wise probe therefore moves
in the same half-space as each instance's own first-order worst-case
direction. Despite the modest absolute alignment, the induced \emph{rankings}
agree closely: the Spearman rank correlation between
$\widehat{\Gamma}_i^{\text{batch}}$ and $\widehat{\Gamma}_i^{\text{inst}}$
is $0.909$, and the two rankings share $86.1\%$ of their top-$20\%$ elements
(main Fig.~\ref{fig:batch-validation}). The alignment is driven by a shared
factor: $\|\nabla_\theta M_i\|_2$ exhibits substantially larger
across-sample variation than $\alpha_{i,\mathcal B}$ does (log-scale
standard deviations $0.78$ vs.\ $0.59$).
\paragraph{Interpretation and scope.}
The batch-wise anchor is not an absolute substitute for instance-wise
worst-case perturbation: the two probes differ in both direction and
first-order magnitude. Its practical value lies in \emph{ranking
consistency}: the samples it downweights most strongly are, with high
probability, the same samples an instance-wise probe would downweight.
This supports the batch-shared construction as a computationally
efficient proxy for the ordering-based reweighting in
Eq.~\eqref{eq:GAPO-loss}, while clarifying that the guarantee in
Proposition~\ref{prop:batch-degradation} is a statement about the
batch-directional quantity rather than instance-wise optimality. These
observations are based on first-order approximations on a single
training checkpoint; trajectory-level dynamics of $\alpha_{i,\mathcal B}$
and $\|\nabla_\theta M_i\|$ are analyzed in the main text
Section~\ref{sec:batch-wise-validation}.

\section{Ablations and Additional Comparisons}
\label{app:ablations}

We expand on the summary in Section~\ref{subsec:ablations-summary}. Unless otherwise specified, all ablations are evaluated on AlpacaEval~2.0 with Mistral-Instruct (7B) under our main-experiment hyperparameters.





\subsection{Sensitivity to Perturbation Radius ($\rho$)}
\label{app:ablation_radius}

\input{tab/abl/2_radius}

Table~\ref{tab:ablation_radius} examines the effect of perturbation magnitude. If $\rho$ is too small the anchor may not sufficiently stress-test local geometry; if too large ($\rho \ge 0.075$), the anchor may leave the trusted local neighborhood and introduce additional noise. In our setting, $\rho=0.05$ provides a favorable sensitivity--stability trade-off and is used for our main experiments. We do not claim this value is universally optimal across model families or datasets.

\subsection{Sensitivity to Anchor Batch Multiplier}
\label{app:anchor-batch-size}

GAPO constructs its pessimistic anchor from the gradient of the batch-average preference margin. The \emph{anchor batch multiplier} denotes how many mini-batches are accumulated to estimate this batch-average margin direction before constructing the shared anchor; it does not change the optimizer's global batch size. The multiplier therefore controls the stability of the shared pessimistic probe: a larger multiplier averages over more samples, while a smaller multiplier keeps the probe more sensitive to local batch context.

\begin{table}[t]
\centering
\caption{%
Sensitivity to anchor batch multiplier on Mistral-7B.
The multiplier denotes how many mini-batches are used to estimate the batch-average margin direction before constructing the shared pessimistic anchor; the optimizer global batch size is held fixed.
}
\label{tab:anchor-batch-size}
\begin{tabular}{lccc}
\toprule
Anchor batch multiplier & AlpacaEval 2 WR & GSM8k & ARC-C \\
\midrule
1$\times$  & 35.7 & 36.7 & \textbf{66.8} \\
4$\times$  & 37.2 & 39.0 & 66.2 \\
8$\times$  & \textbf{38.7} & \textbf{40.3} & 66.3 \\
16$\times$ & 36.5 & 38.6 & 66.1 \\
\bottomrule
\end{tabular}
\end{table}

Table~\ref{tab:anchor-batch-size} shows a clear intermediate optimum. Using a single batch yields weaker AlpacaEval~2 and GSM8k performance, consistent with a noisier estimate of the batch-average margin direction. Increasing the multiplier to $4\times$ and $8\times$ improves both alignment and reasoning, with $8\times$ achieving the best AlpacaEval~2 WR and GSM8k scores. Further increasing the multiplier to $16\times$ reduces performance, suggesting that overly large anchor batches may smooth out useful batch-context variation. ARC-C remains relatively stable across settings, so the main effect is on instruction-following and GSM8k. We therefore use the $8\times$ anchor batch multiplier in our main experiments as a practical sensitivity--stability trade-off; we do not claim this value is universally optimal across model families or datasets.

\subsection{Perturbation Scope}
\label{app:ablation_scope}

\input{tab/abl/3_layer}

Table~\ref{tab:ablation_scope} compares perturbing only the last layer (LM head) versus the full model. Full-parameter perturbation performs better in the reported setting, suggesting that preference brittleness is not confined to the final projection layer. We do not claim that the entire representation space is uniformly responsible; rather, restricting the probe to the LM head appears to underestimate where local margin degradation arises in practice.

\subsection{Gradient-Norm Reweighting Baseline}

Proposition~\ref{prop:tri-factor} shows that the Anchor Gap can be decomposed into gradient magnitude, batch-direction alignment, and directional curvature. Moreover, Appendix~F shows that the gradient-magnitude term contributes substantially to the induced ranking. We therefore test whether GAPO's gains can be explained by gradient magnitude alone.

We construct a GradNorm-Reweight baseline by replacing the Anchor Gap $\Gamma_i$ in the GAPO weight
\[
w_i = \beta \sigma(\gamma - \beta \Gamma_i)
\]
with a normalized per-instance margin-gradient norm. This baseline preserves the standard preference-gradient direction and the same scalar reweighting form as GAPO, but removes the batch-conditioned pessimistic anchor and the alignment/curvature factors. All other hyperparameters and training settings are kept identical to the Mistral-Instruct main experiment.

Table~\ref{tab:gradnorm_ablation} reports the results. GradNorm-Reweight is a stronger control than SimPO on some metrics: it improves AlpacaEval 2 LC from 32.1 to 33.5 and GSM8k from 36.6 to 37.0. This confirms that local sensitivity, as measured by gradient magnitude, is relevant for preference reweighting. However, GradNorm-Reweight remains clearly below GAPO, which reaches 35.7 AlpacaEval 2 LC, 38.7 WR, and 40.3 GSM8k. ARC-C remains essentially unchanged across the three methods.

These results support the interpretation of Proposition~\ref{prop:tri-factor}: gradient magnitude is one component of the Anchor Gap, but it is not sufficient to explain GAPO's gains. The batch-conditioned anchor additionally measures whether each instance's margin degrades under the shared pessimistic direction, yielding a more informative stability signal than norm-only attenuation.

\subsection{Distinguishing GAPO from SAM}
\label{app:sam_comparison}

\input{tab/abl/4_sam}

Since GAPO uses local perturbations, a natural question is whether its gains can be explained by flatness-seeking optimization alone. We compare against SimPO trained with the SAM optimizer using the same perturbation radius $\rho=0.05$ on Mistral-7B. SimPO+SAM provides modest gains over SimPO on some metrics, but the effect is not uniform across evaluators. GAPO achieves stronger overall performance, suggesting that the key benefit is not simply optimizer-level flatness, but the conversion of local margin degradation into instance-wise preference-gradient weights.

SAM applies a uniform optimizer-level flatness bias to every preference pair within a batch and does not differentiate among instances. GAPO, in contrast, uses the shared pessimistic probe to compute a per-instance Anchor Gap and converts this signal into scalar reweighting of the standard preference-gradient: pairs with large Anchor Gap receive smaller update weights, while non-brittle pairs receive larger Anchor-Gap-based weights while preserving their preference-gradient direction. The two methods address orthogonal aspects of geometry.

\section{Computational Efficiency}
\label{app:efficiency}

GAPO introduces additional computation because each update constructs a pessimistic anchor before evaluating the Anchor Gap. 
In our implementation, this requires an additional gradient computation and an anchor forward pass, leading to roughly a \(2\times\) wall-clock overhead relative to standard SimPO under the same Mistral-7B training setup.
We report this cost explicitly because GAPO trades additional per-step computation for more selective Anchor-Gap-based reweighting.

\paragraph{Wall-clock-matched diagnostic.}
To test whether GAPO's gain can be explained merely by spending more wall-clock time, we compare one epoch of GAPO against extended SimPO training under the same recipe.
In our setup, SimPO completes one epoch in approximately 1.5 hours, while one GAPO epoch and two SimPO epochs require approximately 3 hours.
Figure~\ref{fig:efficiency_stability_app} shows that extending SimPO to the comparable wall-clock budget improves AlpacaEval~2 LC only modestly, reaching \(33.0\), whereas one epoch of GAPO reaches \(35.7\).
This \(+2.7\) percentage-point gap suggests that GAPO's improvement is not explained solely by additional wall-clock time in this setting.


\paragraph{Scope of the comparison.}
This analysis is a wall-clock diagnostic rather than a complete hardware-independent FLOPs study. 
Exact FLOP accounting can depend on implementation details such as fused attention kernels, gradient checkpointing, and how the anchor gradient is computed.
We therefore interpret Figure~\ref{fig:efficiency_stability_app} as evidence that, under our Mistral-7B recipe, simply spending comparable time on longer SimPO training does not reproduce GAPO's AlpacaEval~2 improvement.
A more exhaustive efficiency study across hardware, sequence lengths, and additional baselines is left for future work.

\section{Auxiliary Hessian Eigenspectrum Analysis}
\label{app:hessian}
\input{tab/main/5_hessian}

To complement the theoretical analysis, we measure the local curvature of models trained with DPO, SimPO, and GAPO by approximating the Hessian eigenspectrum via the Lanczos algorithm, reporting eigenvalues ($\lambda$) averaged over training batches.
To reduce computational cost, we estimate the eigenspectrum only for the final projection (LM head) parameters; thus, these results should be interpreted as a targeted probe of local sharpness rather than a full-model curvature estimate.

\textbf{Sharpness.} Table~\ref{tab:hessian_analysis} compares the Hessian eigenvalues.
Large positive eigenvalues (e.g., $\lambda_{max}$) indicate sharp directions, which are often associated with worse generalization~\cite{foret2020sharpness, keskar2017on}.
Across methods, GAPO exhibits a small $\lambda_{max}$ in this analysis, consistent with reduced sharpness in the probed subspace.

\textbf{Subspace curvature profile.} The trend is also reflected in the top-eigenvalue averages. While this does not by itself establish global flatness, it provides empirical support that GAPO's Anchor-Gap-based reweighting can bias optimization toward locally less sharp solutions in the probed subspace.

Notably, the Hessian is often indefinite, exhibiting both positive and negative curvature \cite{dauphin2014identifyingattackingsaddlepoint}, a characteristic feature of saddle points. Under a second-order approximation, the magnitude of the most negative eigenvalue $|\lambda_{\min}|$ characterizes the steepness of the locally most unstable direction.
Compared to DPO/SimPO, GAPO exhibits a substantially smaller-magnitude negative curvature in the LM-head subspace, suggesting reduced local instability along the most negatively-curved direction within the probed subspace. We emphasize that this is a targeted curvature probe and does not by itself establish full-model robustness.

%% file: tab/main/4_valuation.tex
\begin{table}[t]
    \centering
    \caption{\textbf{Auxiliary SimPO training on Anchor-Gap-ranked subsets.} Comparison of SimPO models trained on 30\% data subsets selected by the Anchor Gap ($\Gamma$). The Stable subset outperforms Random and Unstable.}
    \label{tab:data_efficiency}
    \resizebox{0.7\columnwidth}{!}{%
    \begin{tabular}{lcccc}
    \toprule
    Sampling (30\%) & Metric & AlpacaEval2 & GSM8k & ARC-C \\
    \midrule
    Random & - & 12.8 & 54.3 & 63.6 \\
    Unstable & High $\Gamma$ & 9.80& 54.8 & 63.4 \\
    Stable & Low $\Gamma$ & \textbf{13.8} & \textbf{56.2} & \textbf{64.2} \\
    \bottomrule
    \end{tabular}%
    }
\end{table}

%% file: tab/abl/2_radius.tex
\begin{table}[h]
    \centering
    \caption{\textbf{Effect of Perturbation Radius ($\rho$).} Performance (WR) sensitivity across different perturbation magnitudes on Alpaca-Eval2.}
    \label{tab:ablation_radius}
    \begin{tabular}{lcc}
    \toprule
    \textbf{Radius ($\rho$)} & \textbf{Mistral-Instruct} & \textbf{Llama-3-Base} \\
    \midrule
    0.025 & 37.9 & 20.8 \\
    0.05 & \textbf{38.7} & \textbf{23.7} \\
    0.075 & 37.3 & 20.5 \\
    0.10  & 37.0 & 20.2 \\
    \bottomrule
    \end{tabular}
\end{table}

%% file: tab/abl/3_layer.tex
\begin{table}[h]
    \centering
    \caption{\textbf{Effect of Perturbation Scope.} Comparison between perturbing only the last layer versus full parameters in Llama3-Base.}
    \label{tab:ablation_scope}
    \begin{tabular}{lc}
    \toprule
    Scope & WR \\
    \midrule
    LM head & 20.7 \\
    Full Parameters & \textbf{23.7} \\
    \bottomrule
    \end{tabular}
\end{table}

%% file: tab/abl/4_sam.tex
\begin{table}[h]
    \centering
    \caption{\textbf{Comparison with SimPO+SAM.} SAM introduces optimizer-level flatness-seeking, whereas GAPO uses Anchor-Gap-based instance-wise reweighting. SimPO+SAM improves over SimPO on some metrics but does not match GAPO's overall alignment--reasoning trade-off.}
    \label{tab:sam_comparison}
        \begin{tabular}{lcccc}
        \toprule
        \textbf{Method} & \textbf{AlpacaEval 2} & \textbf{Arena-Hard} &\textbf{GSM8k}&\textbf{ARC-C} \\
        \midrule
        SimPO & 32.1& 21.0&36.6&\textbf{66.3}\\
        SimPO+SAM & 34.9 & 20.1&38.6&66.1 \\
        \textbf{GAPO} & \textbf{35.7} & \textbf{22.7}& \textbf{40.3}&\textbf{66.3}\\
        \bottomrule
        \end{tabular}
\end{table}

%% file: tab/main/5_hessian.tex
\begin{table}[h]
    \centering
    \caption{\textbf{Hessian Eigenspectrum Analysis.} Comparison of the Hessian eigenvalues ($\lambda$) averaged over 1,000 samples from the training set.}
    \label{tab:hessian_analysis}
    \resizebox{0.6\columnwidth}{!}{%
    \begin{tabular}{l|ccc}
    \toprule
    \textbf{Method} &$\lambda_{max}$ (Sharpness) $\downarrow$ & Top-5 Avg $\downarrow$  &$\lambda_{min}$ \\
    \midrule
    DPO & 149.6 & 103.4  & -143.1 \\
    SimPO & 158.8 & 98.9  & -106.9 \\
    \midrule
    \textbf{GAPO} & \textbf{145.1} & \textbf{94.9}  & -81.4 \\
    \bottomrule
    \end{tabular} }
\end{table}